%% file: main.tex
\documentclass[11pt, letterpaper, logo, onecolumn, copyright, numbering]{acc}

\usepackage{url}

\input{pkgs}

\title{\sysrlt: Scaling Long-Horizon LLM Agents via Indexed Experience Memory}

\author[1]{Zhenting Wang}
\author[1]{Huancheng Chen}
\author[1]{Jiayun Wang}
  \author[1]{Wei Wei}

  \affil[1]{Center for Advanced
   AI, Accenture}

  \date{\today}

\input{contents/abstract}

\begin{document}

\maketitle
\def\Snospace~{Section }
\def\sectionautorefname{\Snospace}
\def\subsectionautorefname{\Snospace}
\def\subsubsectionautorefname{\Snospace}
\def\chapterautorefname{\Snospace}

\input{contents/introduction.tex}

\input{contents/related}
\input{contents/method.tex}

\input{contents/evaluation}

\input{contents/conclusion.tex}
\newpage

\raggedright
\bibliography{reference}
\bibliographystyle{iclr2026_conference}

\newpage
\input{contents/appendix}

\end{document}

%% file: pkgs.tex
\usepackage{natbib}

\newcommand{\sys}{\mbox{\textsc{Memex}}\xspace}
\newcommand{\sysrl}{\mbox{\textsc{MemexRL}}\xspace}
\newcommand{\sysrlt}{\mbox{\textsc{Memex(RL)}}\xspace}

\usepackage{amsthm}

\usepackage{times}
\newtheorem{theorem}{Theorem}
\newtheorem{proposition}[theorem]{Proposition}

\usepackage{xcolor}
\definecolor{darkblue}{RGB}{32, 64, 129}
\definecolor{darkgreen}{RGB}{0, 110, 85}
\definecolor{darkred}{RGB}{153, 0, 0}
\definecolor{graytext}{gray}{0.45}

\definecolor{evaluatorcolor}{RGB}{255,230,230}    %
\definecolor{evaluatorframe}{RGB}{180,50,50}      %
\definecolor{taskgencolor}{RGB}{230,255,230}      %
\definecolor{taskgenframe}{RGB}{50,180,50}        %
\definecolor{executioncolor}{RGB}{230,230,255}    %
\definecolor{executionframe}{RGB}{50,50,180}      %
\definecolor{lightgray}{RGB}{240,240,240}
\definecolor{darkgray}{RGB}{80,80,80}

\usepackage{pifont}
\usepackage{longtable}

\newtheorem{definition}{Definition}

\usepackage{makecell}

\usepackage{amsmath}
\usepackage{pifont}

\usepackage{booktabs}
\usepackage{multirow}
\usepackage{graphicx}
\usepackage{float}
\usepackage{caption}
\usepackage{subcaption}
\usepackage{xspace}

\usepackage[utf8]{inputenc}
\usepackage[T1]{fontenc}
\usepackage[dvipsnames,table]{xcolor}
\usepackage[most]{tcolorbox}
\usepackage{enumitem}
\usepackage{listings}

\definecolor{darkgray}{rgb}{0.3, 0.3, 0.3}
\definecolor{lightgray}{rgb}{0.95, 0.95, 0.95}
\definecolor{codegray}{rgb}{0.98, 0.98, 0.98}

\newtcolorbox{promptbox}[1]{
    colback=lightgray,
    colframe=darkgray,
    colbacktitle=darkgray,
    coltitle=white,
    boxrule=2pt,
    arc=0mm,
    left=10pt,
    right=10pt,
    top=10pt,
    bottom=10pt,
    fonttitle=\bfseries\large,
    title={#1},
    attach boxed title to top left={yshift=-2mm}
}

\usepackage{wrapfig}

\usepackage{algorithm}
\usepackage{algorithmicx}
\usepackage[noend]{algpseudocode}

\definecolor{deepred}{rgb}{0.631,0.102,0.102}
\definecolor{skyblue}{HTML}{126da2}

\definecolor{accpurple}{HTML}{A100FF}
\hypersetup{
     colorlinks = true,
     breaklinks = true,
     linkcolor = accpurple,
     urlcolor = accpurple,
     citecolor = accpurple
     }

\definecolor{orange}{rgb}{1,0.5,0}

\algnewcommand{\LineComment}[1]{\State \(\triangleright\) #1}

\algnewcommand\algorithmicinput{\textbf{Input:}}
\algnewcommand\algorithmicoutput{\textbf{Output:}}
\algnewcommand\Input{\State \algorithmicinput\ }
\algnewcommand\Output{\State \algorithmicoutput\ }

%% file: contents/abstract.tex
\begin{abstract}
Large language model (LLM) agents are fundamentally bottlenecked by finite context windows on long-horizon tasks. As trajectories grow, retaining tool outputs and intermediate reasoning in-context quickly becomes infeasible: the working context becomes prohibitively long, eventually exceeds the context budget, and makes distant evidence harder to use even when it is still present. Existing solutions typically shorten context through truncation or running summaries, but these methods are fundamentally lossy because they compress or discard past evidence itself. We introduce \sys, an indexed experience memory mechanism that instead compresses context without discarding evidence. \sys maintains a compact working context consisting of concise structured summaries and stable indices, while storing full-fidelity underlying interactions in an external experience database under those indices. The agent can then decide when to dereference an index and recover the exact past evidence needed for the current subgoal. We optimize both write and read behaviors with our reinforcement learning framework \sysrl, using reward shaping tailored to indexed memory usage under a context budget, so the agent learns what to summarize, what to archive, how to index it, and when to retrieve it. This yields a substantially less lossy form of long-horizon memory than summary-only approaches. We further provide a theoretical analysis showing the potential of the Memex loop to preserve decision quality with bounded dereferencing while keeping effective in-context computation bounded as history grows. Empirically, on challenging long-horizon tasks, \sys agent trained with \sysrl improves task success while using a significantly smaller working context.
\end{abstract}

%% file: contents/introduction.tex
\section{Introduction}
\label{sec:intro}

Large language model (LLM) agents are increasingly deployed as general-purpose problem solvers that plan, call tools, and interact with users over extended periods~\citep{openai2025gpt5,google2025Gemini3ProModelCard,anthropic2025claude4,yang2025qwen3,chen2025minimax,team2026kimi,zeng2026glm}. Unlike single-shot prompting or short multi-turn chats, long-horizon agents are asked to execute workflows spanning dozens to hundreds of steps and tool calls, often producing large volumes of intermediate observations, tool outputs, and reasoning traces along the way, such as searching and cross-referencing scientific literature, exploring configuration spaces for code and infrastructure, orchestrating multi-API business processes, or iteratively refining complex analyses for a user~\citep{zhou2025mem1,yan2025memory,wu2025resum,sun2025scaling}. In these settings, success depends not only on local reasoning quality, but also on whether the agent can preserve and later reuse information that first appeared many steps earlier, such as a constraint mentioned at the beginning of a conversation, a failure mode surfaced by a tool, or an API response that only becomes decisive much later.

This creates a fundamental bottleneck for long-horizon agents. Although modern LLMs support increasingly large context windows, those windows remain finite, while agent trajectories naturally keep growing as observations, tool outputs, and intermediate reasoning are appended over time. A straightforward strategy is to retain as much of this history as possible in-context, but this quickly becomes inefficient or infeasible. Prompts become prohibitively long, eventually exceed the available context budget, and make distant evidence harder to use even when it is still present. Existing systems mostly address this pressure through static context engineering, such as large rolling prompts, heuristic summarization, or related memory heuristics. These approaches can reduce active working context, but they usually do so by either truncating substantial portions of past interactions or compressing them into lossy summaries that are difficult to faithfully recover later~\citep{yan2025memory,wu2025resum,sun2025scaling}.
A natural alternative is to log everything into an external memory and retrieve past content by semantic similarity~\citep{corley2005measuring} when needed. However, in long-horizon tool use, this design is often brittle. When memory consists of a large pool of noisy, near-duplicate fragments, retrieval becomes ambiguous, and the model must repeatedly re-parse loosely structured history. More fundamentally, similarity-based retrieval does not specify how the agent should organize its own experience. It does not determine which intermediate results deserve stable references, which branches are dead ends, or how artifacts should be named so that later access is precise rather than fuzzy. As a result, many current systems still rely on hand-designed templates and heuristics for memory construction and retrieval. This leaves a gap between how humans manage long-term work, keeping a small set of active concepts while preserving exact notes and references externally, and how LLM agents manage theirs.

\input{figtex/intro}

In this work, we argue that long-horizon LLM agents need a memory mechanism that \emph{compresses context without discarding evidence}. We instantiate this idea with \textbf{\sys}, whose core component is \emph{Indexed Experience Memory}. As illustrated in Figure~1, \sys replaces a long tool-use trajectory in the working context with a compact indexed summary, while archiving the full underlying artifacts in an external key--value experience store under stable indices. Later, when a specific past result becomes relevant, the agent can explicitly dereference an index to recover the exact archived content and re-inject it into the working context. In this way, \sys explicitly separates a compact in-context \emph{working context} from a full-fidelity external \emph{experience archive}. At each step, the agent writes a concise, structured \emph{indexed summary} into its working context, recording actionable progress together with stable references. In parallel, the full underlying artifacts needed for faithful later reuse, such as tool outputs, logs, code snippets, and other fine-grained evidence, are archived in the external store under those indices. Subsequent decisions operate primarily over this short indexed working context, and the agent brings back raw evidence only through explicit index dereferencing when it becomes relevant to the current subgoal. This makes memory access precise and auditable because an index points to a concrete archived artifact rather than to an approximate semantic match.
Crucially, \sys is not a hand-crafted memory heuristic. We treat memory operations, including writing indexed summaries, archiving artifacts, and dereferencing indices, as first-class actions in the same decision space as environment tools. This introduces a distinctive long-horizon credit assignment problem. A well-timed compression or a well-designed index may only pay off many steps later by enabling precise evidence recovery, avoiding redundant tool calls, and preventing context overflow. In contrast, a locally plausible but poorly structured summary can silently derail downstream reasoning. To learn these behaviors, we introduce \textbf{\sysrl}, a reinforcement learning framework that optimizes both the \emph{write} policy, including what to summarize, what to archive, how to index, and when to compress, and the \emph{read} policy, including when and what to dereference, under a context budget. \sysrl combines reward shaping tailored to indexed memory usage with a compression-adaptive training procedure that preserves learning signal for delayed memory decisions across long trajectories. We also expose context status to the agent through a soft triggering mechanism, turning compression timing into a learnable skill rather than a fixed system rule.
Conceptually, \sys follows a simple principle: keep the active reasoning state small, but do not throw evidence away. Instead of forcing the agent to repeatedly reason over an ever-growing prompt or over lossy summaries, \sys maintains a compact control state in-context while preserving exact past interactions off-context for later reuse. This design is closer to how humans manage long, tool-heavy work, where external artifacts such as notes, file names, and bookmarks serve as stable access routes to detailed evidence without requiring everything to remain in working memory~\citep{andy1998extended}. It also echoes index-based accounts of episodic memory, in which compact retrieval cues enable later reactivation of richer underlying traces~\citep{teyler1986hippocampal}. \sys operationalizes these principles in an agent setting. The indexed summary serves as an actionable working state, while stable indices provide explicit retrieval cues into a full-fidelity experience database.

To clarify why this design is promising, we also provide a theoretical analysis of the Memex loop. We show its potential to support two desirable properties at the same time: preserving decision quality through bounded explicit dereferencing, and keeping the agent's effective in-context computation bounded as the full message history grows. This analysis is not meant to claim that \sysrl always learns such summaries in practice. Rather, it characterizes the regime in which compact indexed summaries together with bounded retrieval are sufficient to support accurate decisions with bounded working context.
We then study whether this theoretical potential can be realized in practice. To do so, we evaluate \sys on challenging long-horizon tasks that require agents to interleave planning and tool calls over dozens to hundreds of steps while revisiting fine-grained evidence long after it first appears. In this regime, \sys improves task success while operating with a small active working context under tight context budgets.
Overall, this paper makes four contributions. First, we introduce Indexed Experience Memory, a memory interface that pairs a compact in-context indexed summary with a full-fidelity external experience archive under stable references, enabling precise explicit dereferencing of past evidence. Second, we propose \sysrl, a reinforcement learning framework that trains memory write and read behaviors with reward shaping and compression-adaptive trajectory processing. Third, we provide a theoretical analysis showing the potential of the Memex loop to preserve decision quality with bounded dereferencing while keeping effective in-context computation bounded as history grows. Fourth, we provide an empirical study on long-horizon tasks showing that learned indexed experience memory improves task success under tight context budgets while maintaining a small active working context.

%% file: figtex/intro.tex
\begin{figure}[t]
    \centering
    \includegraphics[width=0.85\textwidth]{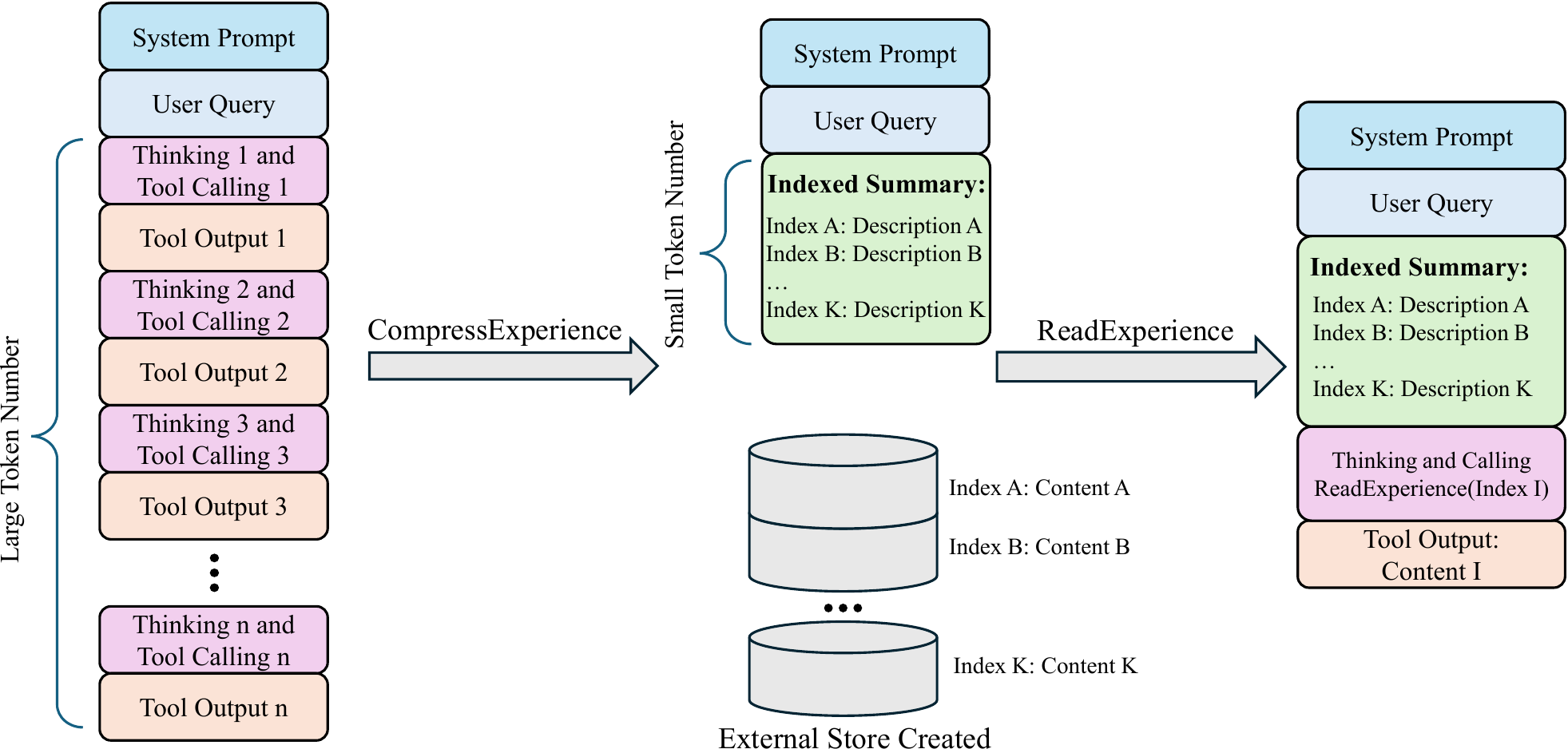}
    \caption{ Memex agent loop overview. CompressExperience replaces a long tool-use trajectory in the context with a compact indexed summary, while storing detailed contents in an external key–value store. Later, ReadExperience(index) dereferences an index to retrieve the exact content and re-inject it into the context, enabling long-horizon execution under a small context window.
    }\label{fig:intro}
    \vspace{-0.3cm}
\end{figure}

%% file: contents/related.tex
\section{Related Work}
\label{sec:related}

\textbf{LLM Memory.}
LLM memory has emerged as an important research direction for enabling models and agents to store, organize, and retrieve information beyond their immediate context~\citep{Hu-Survey2025}. Early frameworks such as MemGPT \citep{Packer-Arxiv2023} and MemoryBank \citep{Zhong-AAAI2024}, which organize dialogue history, accumulate experience, and self-evolve through continuous multi-turn updates, have inspired a line of subsequent works \citep{Rezazadeh-ICLR2025, Chhikara-Arxiv2025, Kang-Arxiv2025, Tan-ACL2025,Rasmussen-Arxiv2025} that focus on efficiently storing memory using various structured representations by dynamic mechanism. Long-term memory in LLM agents can be broadly categorized into factual memory and experiential memory.
Factual memory functions as a declarative knowledge base that supports consistency \citep{Zhou-Arxiv2023, Xu-NIPS2025,Long-Arxiv2025}, coherence \citep{Liu-Arxiv2023, Chen-Arxiv2025}, and logs environment states \citep{Wang-NAACL2024, Jimenez-NIPS2024}, whereas experiential memory captures knowledge accumulated from historical trajectories \citep{Zhao-AAAI2024, Zhou-Arxiv2025, zhang2025memgen} and interaction patterns \citep{Shinn-NIPS2023, Wang-AWM-ICML2024, Ouyang-Arxiv2025}, enabling continual learning and self-evolution. Another line of work integrates information directly into a model’s internal components—such as hidden states \citep{Wang-m+-ICML2025} and key–value caches \citep{Qian-memorag-WC2025}, rather than relying on external memory databases, in pursuit of more efficient information compression. 
Despite these advances, a central challenge in LLM memory remains how to preserve salient information while enabling efficient compression and retrieval.

\textbf{Long-Horizon LLM Agents.}
Although LLMs have finite context windows, long-horizon agent trajectories naturally accumulate observations, actions, and intermediate reasoning over many steps, causing the prompt to grow until it becomes inefficient or exceeds the available context budget. This ever-growing prompt not only increases inference cost and memory usage \citep{yao2023react,kwon2023efficient,zheng-sglang-NIPS2024,liu2026klong}, but also makes it harder for the model to effectively use distant evidence even when that evidence remains somewhere in the retained history \citep{An-ICLR2025,wu-longmemeval-ICLR2025}. To address this challenge, recent work has explored learned memory management for long-horizon settings. Early RL-based approaches such as MEM1 \citep{zhou2025mem1}, MemAgent \citep{yu2025memagent}, Memory-R1 \citep{yan2025memory}, and Mem-$\alpha$ \citep{wang2025mem} train models to compress, organize, or update historical information into compact memories, primarily in question-answering or long-context settings. More recent work extends learned summarization to interactive agentic reinforcement learning and long-horizon search. SUPO \citep{lu2025scaling} and ReSum \citep{wu2025resum} incorporate memory summarization into multi-turn tool-use and search pipelines, while FoldGRPO \citep{sun2025scaling} and AgentFold \citep{ye2025agentfold} further study structured context folding and multi-scale historical summaries. However, most existing approaches still center on lossy summary compression or generic memory organization: they reduce the active context, but typically do not preserve a precise, full-fidelity archive of past evidence that can be deterministically revisited later. In contrast, Memex formulates memory as indexed experience management: it keeps a compact in-context working state, while archiving full-fidelity artifacts off-context under reusable stable indices. This design allows the agent to not only decide when to compress, but also what to archive, how to index it, and when to explicitly dereference exact past evidence for downstream reasoning, making memory access substantially less lossy than summary-only approaches.

%% file: contents/method.tex
\section{\sys Agent}
\label{sec:method}

\subsection{Overview}
LLM agents must operate under a fixed context window. As interactions accumulate, keeping the
entire trajectory in-context quickly becomes infeasible, while lossy compression often removes precisely the evidence
needed later (e.g., logs, thinking processes, and intermediate tool outputs). The key component of \sys agent is the \textbf{Indexed Experience Memory},
which keeps a compact in-context state while preserving full-fidelity artifacts in an external experience store under
stable indices. The agent periodically rewrites its working context into a short, actionable \emph{indexed summary} and
accesses archived evidence only through explicit index dereferencing when needed.

\subsection{Indexed Experience Memory}
\label{sec:method_interface}

We maintain the agent context window \(\mathbf{M}\) and an external experience store
\(\mathcal{D}:\mathsf{index}\mapsto\mathsf{content}\), a key--value database accessed by explicit index dereferencing.
Throughout this paper, the \emph{working context} refers to the dynamic portion of \(\mathbf{M}\) beyond the fixed system prompt and task instruction. Namely, \(\mathbf{M} = [m_0, u, \mathbf{M}_{\mathrm{work}}]\), where \(\mathbf{M}_{\mathrm{work}}\) denotes the working context. Depending on the stage of interaction, \(\mathbf{M}_{\mathrm{work}}\) may include the indexed summary, the agent's intermediate reasoning, tool calls, tool outputs, and any retrieved experience blocks injected back into context.
We next define the indexed summary \(\sigma\) and the indexed experience memory operations.

\begin{definition}[Indexed Summary]
\label{def:indexedsummary}
Given an external experience store $\mathcal{D}:\mathsf{index}\mapsto \mathsf{content}$,
an \emph{Indexed Experience Summary} $\sigma$ is an in-context state $\sigma=(s,\mathcal{I})$, where
$s$ is a compact, actionable progress state (e.g., verified information, and plans), and
$\mathcal{I}$ is a finite set of pairs
\(
\mathcal{I} \triangleq \{(\mathsf{index}, \mathsf{description})\},
\)
where $\mathsf{index}\in \mathrm{dom}(\mathcal{D})$ is a stable index and $\mathsf{description}$ is a summarized descriptor.
\end{definition}

\begin{definition}[Indexed Experience Memory]
\label{def:memory}
Let $\mathbf{M}$ denote the agent's context window, initialized as $\mathbf{M}=[m_0,u]$
(system prompt and task instruction), and let $\mathcal{D}$ be an external experience store.
An \emph{Indexed Experience Memory} is a pair $(\mathsf{IndexedSummary}, \mathcal{D})$ with:

\textbf{(i) In-context indexed summary.} $\mathsf{IndexedSummary}$ is a structured state kept in-context
that records the actionable progress state,
and provides an \emph{index map} that binds semantic summarized descriptions to indices in $\mathcal{D}$, enabling
explicit reference to archived evidence. See \autoref{def:indexedsummary} for detailed definition.

\textbf{(ii) External Store.} $\mathcal{D}: \mathsf{index}\mapsto \mathsf{content}$ maps \emph{stable indices}
to archived content blocks (e.g., tool outputs, traces, code snippets). The agent can access archived
evidence only by dereferencing indices via $\textsc{ReadExperience}(i)$, which returns $\mathcal{D}[i]$.

\textbf{(iii) Compression Operation.} $\textsc{CompressExperience}(\mathsf{IndexedSummary}, \mathsf{MemoryBlocks})$
performs archival by writing each $(\mathsf{index},\mathsf{content})\in\mathsf{MemoryBlocks}$ into
$\mathcal{D}$, and then rewrites the working context to $\mathbf{M}\leftarrow[m_0,u,\mathsf{IndexedSummary}]$.
Intuitively, it transforms an ever-growing working context into a small indexed summary.

\textbf{(iv) Read Operation.} $\textsc{ReadExperience}(\mathsf{index})$ returns the archived block by dereferencing
$o\leftarrow \mathcal{D}[\mathsf{index}]$, and appends it to the working context as a new message
$\mathbf{M}\leftarrow \mathbf{M}\oplus[o]$.
\end{definition}

For compression operation, note that the content in each memory block supports two modes:
(a)~\emph{explicit authoring}, where the model writes the content directly (e.g., reorganized notes or summarized findings); and
(b)~\emph{anchor-based extraction}, where the model specifies three short text anchors (\texttt{start\_anchor}, \texttt{mid\_anchor}, \texttt{end\_anchor}) that uniquely identify a span within the current conversation. The system locates the matching span and archives it verbatim, with the \texttt{mid\_anchor} serving as a verification checkpoint to prevent false matches.
This dual-mode design gives the model flexibility to paraphrase for storage efficiency or preserve exact content (e.g., precise object IDs, code snippets, test outputs) without generating redundant tokens.

\paragraph{Examples for working context compression.}
We first illustrate how Indexed Experience Memory's compression operation rewrites an ever-growing context into a compact,
pointer-heavy state. Here we show a concrete before/after compression example:
multiple rounds of reasoning and tool outputs are archived into $\mathcal{D}$ under stable indices, while the
working context is rewritten to $\mathbf{M}=[m_0,u,\mathsf{IndexedSummary}]$. The \(t_i\) and \(o_i\) are the thinking with tool call request and the tool output at step \(i\), respectively.

\input{examples/context}

\input{alg/alg}

\paragraph{Agent Loop with Indexed Experience Memory.}
\autoref{alg:exp_memory_main_msgs_only} specifies the full execution loop.
Initialization sets $\mathbf{M}\gets[m_0,u]$, $\mathcal{D}\gets\emptyset$, and $answer\gets\varnothing$
(lines 4--6), where the two anchors $m_0$ (system prompt) and $u$ (task instruction) are never compressed.
At each step $t$ (line 7), the system first appends \textsc{ContextStatus}$(\mathbf{M},\tau)$ (line 8),
a deterministic message computed from the current context window $\mathbf{M}$ that reports its working context token usages and the threshold $\tau$ (e.g., ``[Context Status: working context tokens=6932, threshold=8000]''). The agent then emits thinking $z_t$ and a tool call $c_t$
conditioned on $\mathbf{M}$ (line 9), and appends them to the working context (line 10).
If $c_t=\textsc{CompressExperience}(\mathsf{IndexedSummary},\mathsf{MemoryBlocks})$, the agent archives each
$(\mathsf{index},\mathsf{content})\in\mathsf{MemoryBlocks}$ into the external store via
$\mathcal{D}[\mathsf{index}]\leftarrow \mathsf{content}$ (lines 12--13), and rewrites the working context to
$\mathbf{M}\leftarrow[m_0,u,\mathsf{IndexedSummary}]$ (line 14).
If $c_t=\textsc{ReadExperience}(\mathsf{index})$, the agent dereferences the store via
$o_t\leftarrow \mathcal{D}[\mathsf{index}]$ (line 16) and appends the retrieved block as a new message
$\mathbf{M}\leftarrow \mathbf{M}\oplus[o_t]$ (line 17).
If $c_t=\textsc{Finish}(y)$, the agent sets $answer\gets y$ (line 19) and returns (line 20).
Otherwise, it executes other tools (line 22) and appends the observation to $\mathbf{M}$ (line 23).
If no \textsc{Finish} occurs within $T_{\max}$ steps, the procedure terminates (line 24).

\subsection{\sysrl: Learning Indexed Experience Memory Enabled Agent with RL}
\label{sec:method_rl}

The agent loop in \autoref{alg:exp_memory_main_msgs_only} treats memory operations
(\textsc{CompressExperience}, \textsc{ReadExperience}) as tool calls in the same action space as
environment tools and \textsc{Finish}.
A compression step must decide \emph{what to archive}, \emph{how to index it}, and \emph{what to keep in-context}:
archiving the right evidence under stable indices and writing an actionable index-map summary may only pay off many
steps later by enabling precise dereferencing, preventing redundant tool calls, and avoiding context overflow.
Conversely, missing or ambiguous indices can silently derail future reasoning even when the summary appears locally
plausible.
These decisions are inherently difficult to enforce with prompt rules alone, since the utility of an
index entry is revealed only when a downstream step needs to retrieve exactly that artifact.
We therefore learn memory behaviors jointly with task-solving behaviors using reinforcement learning.
We optimize the agent policy with \textbf{\sysrl}, a GRPO-style update specialized to tool-using agents with
Indexed Experience Memory, encouraging trajectories that solve the task while using compression and dereferencing
efficiently and correctly.

\paragraph{Reward design for \sysrl.}
  We define an episode-level return that combines task success with three memory-efficiency penalties,
  each normalized to $[0,1]$:
  \begin{equation}
  R \;=\; R_{\text{task}} - P_{\text{context}} - P_{\text{redundancy}} - P_{\text{format}}.
  \label{eq:memgrpo_reward}
  \end{equation}

  \emph{Context overflow penalty.}
  Let $C_t$ denote the working context size (in tokens) at step $t$, excluding steps where compression was triggered.
  The penalty accumulates overflow tokens beyond threshold $\tau$ across all $T$ steps:
  $P_{\text{context}} = \min\!\left(1,\;\frac{\sum_{t=1}^{T}\max(0,\,C_t-\tau)}{\tau \cdot T}\right)$.
  Normalizing by the maximum reasonable overflow $\tau \cdot T$ bounds the penalty in $[0,1]$.
  This encourages the agent to proactively invoke \textsc{CompressExperience}
  before context grows excessively, rather than waiting until forced truncation degrades performance.

  \emph{Redundant tool call penalty.}
  Let $N_{\text{redundant}}$ count tool calls with identical (tool name, arguments) signatures that were previously
  executed, when no state-modifying operation has occurred since the last identical call. The penalty is:
  $P_{\text{redundancy}} = \frac{N_{\text{redundant}}}{N_{\text{tool\_call}}}$,
  where $N_{\text{tool\_call}}$ is the total number of non-memory tool calls in the trajectory.
  This penalty discourages repetitive exploration patterns---such as
  viewing the same file region multiple times without edits---and encourages the agent to utilize
  \textsc{ReadExperience} to recall previously observed information instead of re-executing identical queries.

  \emph{Format error penalty.}
  Let $N_{\text{malformed}}$ count malformed tool calls detected in the agent's raw output, including:
  (1) tool call tag mismatches (e.g., \texttt{<tool\_call>} without closing \texttt{</tool\_call>}),
  (2) invalid JSON within tool call tags, and
  (3) missing required fields such as \texttt{name} or \texttt{arguments}.
  The penalty is:
  \(
  P_{\text{format}} = \frac{N_{\text{malformed}}}{N_{\text{tool\_call}}}\),
  where $N_{\text{tool\_call}}$ is the number of steps where the agent attempted to invoke a tool.
  This provides a direct learning signal for generating syntactically correct tool invocations,
  which is particularly important for base models without prior tool-use fine-tuning.

\paragraph{Segmented Trajectory Processing.}
When compression occurs during a trajectory, we segment the trajectory at compression boundaries and process each segment as an independent training sample while preserving credit assignment to the final outcome. This design is particularly natural for autoregressive models, whose token probabilities are always conditioned on the current prefix. Once compression is triggered, the prefix for all subsequent generation changes: instead of continuing from the full pre-compression history, the model now conditions on a compressed summary together with the new interactions that follow. In this sense, compression does not merely shorten the context; it changes the conditioning prefix that defines the downstream policy.
Concretely, suppose an agent compresses $k$ times during a trajectory, producing $k+1$ segments $\{S_0, S_1, \ldots, S_k\}$. Each segment $S_i$ maintains its own context. $S_0$ contains the full pre-compression history, while each subsequent segment $S_i$ for $i > 0$ contains the compressed summary from the previous segment together with new interactions, i.e.,
\(
S_i = [\texttt{system}, \texttt{task}, \texttt{summary}_{i-1}, z_{i1}, c_{i1}, o_{i1}, z_{i2}, c_{i2}, o_{i2}, \ldots].
\)
Here, $z_{ij}$ denotes the model's intermediate reasoning or textual generation at the $j$-th interaction step of segment $S_i$, $c_{ij}$ denotes the corresponding tool call or action command, and $o_{ij}$ denotes the resulting observation returned by the environment or tool. Thus, each segment consists of a sequence of reasoning--action--observation tuples appended after the inherited summary context.
During training, we flatten all segments across the batch, allowing each segment to be tokenized and optimized independently under its own context window. Crucially, all segments originating from the same trajectory share the identical terminal reward $R$, which preserves credit assignment to earlier compression decisions through group-relative advantage estimation in GRPO. Thus, the model can learn whether a compression-induced prefix helped or harmed the eventual outcome, and the write policy receives gradient signal from the shared terminal reward even though its effect may only appear in later segments.

\paragraph{Automatic Compression Triggering.}
Unlike prior work that enforces compression automatically when context length exceeds a threshold~\citep{lu2025scaling}, we adopt a soft triggering approach that automatically monitors context status and prompts the agent to compress voluntarily. Specifically, at each step, we append a context status indicator to the observation: \texttt{[Context Status: working=$w$, total=$t$, threshold=$L$]}. When the working context $w$ approaches or exceeds $L$, additional warnings are automatically injected (e.g., \texttt{``working > threshold''}). This design transforms context management from a system-enforced constraint into a learnable skill. The agent must learn to (1) recognize when compression is beneficial, (2) decide the optimal timing for compression based on task semantics rather than arbitrary token counts, and (3) generate high-quality summaries that preserve task-relevant information. Through reinforcement learning with context overflow penalties, the
agent is incentivized to compress proactively while maintaining task performance. This approach offers greater flexibility than hard thresholds---for instance, the agent may defer compression if it anticipates task completion within a few steps, or compress earlier at natural semantic boundaries.

\paragraph{Overall learning algorithm.} For each training instance, we sample a group of $G$ rollouts
$\{y^{(g)}\}_{g=1}^{G}$ from the current policy, where each rollout is a sequence of model tokens that includes
both the agent’s thinking text and a tool call at each step (including memory tools and \textsc{Finish}).
We execute the resulting tool calls to obtain a scalar return $R^{(g)}$ using \autoref{eq:memgrpo_reward}, and
compute a group-relative, normalized advantage
\begin{equation}
A^{(g)} \;=\; \frac{R^{(g)}-\mathrm{mean}\!\big(\{R^{(h)}\}_{h=1}^{G}\big)}
{\mathrm{std}\!\big(\{R^{(h)}\}_{h=1}^{G}\big)+\epsilon}.
\label{eq:memgrpo_adv}
\end{equation}
Let $\pi_{\theta}$ be the current policy and $\pi_{\theta_{\text{old}}}$ be the sampling policy for the group.
For each rollout, we form the token-level ratio
$r^{(g)}_t=\pi_{\theta}(a^{(g)}_t\mid s^{(g)}_t)/\pi_{\theta_{\text{old}}}(a^{(g)}_t\mid s^{(g)}_t)$ and apply a
PPO-style clipped surrogate objective, optionally regularized by a KL penalty to a reference policy $\pi_{\text{ref}}$:
\begin{equation}
\max_{\theta}\;\frac{1}{G}\sum_{g=1}^{G}\sum_{t}
\min\!\Big(r^{(g)}_t A^{(g)},\;\mathrm{clip}(r^{(g)}_t,1-\eta,1+\eta)\,A^{(g)}\Big)
\;-\;\beta\,\mathrm{KL}\!\big(\pi_{\theta}\,\|\,\pi_{\text{ref}}\big).
\label{eq:memgrpo_obj}
\end{equation}
Because $R^{(g)}$ explicitly scores context-budget violations, redundant tool use, and oversized archival blocks,
the group-relative advantage in \autoref{eq:memgrpo_adv} encourages the policy to discover memory behaviors that
improve long-horizon returns: compressing into pointer-heavy index maps that remain usable later, archiving
high-value artifacts under stable indices, and selectively retrieving evidence instead of re-running tools.
In practice, we warm-start \sysrl from supervised demonstrations of well-formed $\sigma$ and block sets
$\mathsf{Blocks}$, then refine with RL under the same message-only interface.

%% file: examples/context.tex
\begin{tcolorbox}[
    enhanced,
    breakable,
    colback=lightgray,
    colframe=darkgray,
    colbacktitle=darkgray,
    coltitle=white,
    boxrule=2pt,
    arc=0mm,
    left=10pt,
    right=10pt,
    top=10pt,
    bottom=10pt,
    fonttitle=\bfseries\large,
    fontupper=\scriptsize,
    fontlower=\scriptsize,
    title={Example for Compression with Indexed Experience Memory (Before/After)},
    attach boxed title to top left={yshift=-2mm},
    halign=flush left,
    before upper=\raggedright
]
\label{ex:indexed_summary_compare}

\textbf{Context Before compression:}\\
\[
\mathbf{M}=[m_0,u,t_1,o_1,t_2,o_2,t_3,o_3,t_4,o_4,t_5,o_5,t_6,o_6]
\]
\begin{description}[leftmargin=1.5em, itemsep=2pt, topsep=2pt]
    \item[$m_0$] System prompt: tool-usage rules and context management constraints. 
    \item[$u$] User task: ``SI.\_collect\_factor\_and\_dimension() cannot properly detect that exponent is dimensionless'' with a repro using \texttt{exp} and a failing ValueError......
    \item[$t_1$] Reasoning: orient in repo to find units code; Tool call: list /testbed tree (2 levels). (approx. 350 tokens)
    \item[$o_1$] Repo snapshot: /testbed structure with key paths under \texttt{sympy/physics/units}. (approx. 600 tokens)

    \item[$t_2$] Reasoning: locate the core method; Tool call: search \texttt{\_collect\_factor\_and\_dimension} in \texttt{sympy/physics/units}. (approx. 280 tokens)
    \item[$o_2$] Search hits: 28 matches, including \texttt{sympy/physics/units/unitsystem.py} (9 matches). (approx. 240 tokens)

    \item[$t_3$] Reasoning: jump to exact definition; Tool call: search for \texttt{def \_collect\_factor\_and\_dimension} in \texttt{unitsystem.py}. (approx. 220 tokens)
    \item[$o_3$] Exact location: match at line 152 in \texttt{unitsystem.py}. (approx. 120 tokens)

    \item[$t_4$] Reasoning: inspect dimension handling in context; Tool call: view \texttt{unitsystem.py} lines 100--160. (approx. 420 tokens)
    \item[$o_4$] Code excerpt: \texttt{get\_dimensional\_expr} and start of \texttt{\_collect\_factor\_and\_dimension}. (approx. 900 tokens)

    \item[$t_5$] Reasoning: inspect full Function branch; Tool call: view \texttt{unitsystem.py} lines 140--206. (approx. 420 tokens)
    \item[$o_5$] Code excerpt: full \texttt{\_collect\_factor\_and\_dimension} showing Function branch return inconsistency. (approx. 1100 tokens)

    \item[$t_6$] Reasoning: preserve reproducible failing case; Tool call: create \texttt{reproduce\_issue.py}. (approx. 280 tokens)
    \item[$o_6$] Reproducer content mirroring the GitHub issue (exp on dimensionless expr). (approx. 450 tokens)
\end{description}

\tcblower

\textbf{Context After compression:}\\
\[
\mathbf{M}=[m_0,u,\mathsf{IndexedSummary}]
\]
\textbf{IndexedSummary (approx. 300 tokens):}\\
\textbf{CurrentStatus:} Next, inspect \texttt{sympy/physics/units/dimensions.py} for \texttt{Dimension} and \texttt{DimensionSystem} to ensure proper construction of \texttt{Dimension(1)}. Then patch \texttt{unitsystem.py} Function branch to return \texttt{(expr.func of factors, Dimension(1))} when args are dimensionless; otherwise, decide correct behavior (likely raise error or compute dimension expression). Finally, run \texttt{reproduce\_issue.py} before and after patch to confirm fix.\\[4pt]

\textbf{IndexMap:}
\begin{description}[leftmargin=1.5em, itemsep=2pt, topsep=2pt]
    \item[ctx\_repo\_snapshot\_001:] /testbed directory snapshot (2-level listing) for orientation; use to avoid re-running repo view.
    \item[ctx\_units\_search\_001:] search hits for "\_collect\_factor\_and\_dimension" across units module; shows key files and counts.
    \item[ctx\_units\_method\_loc\_001:] precise location of def \_collect\_factor\_and\_dimension in unitsystem.py (line 152) to jump directly.
    \item[ctx\_units\_code\_excerpt\_001:] excerpt (lines 100-160) of unitsystem.py showing get\_dimensional\_expr and beginning of \_collect\_factor\_and\_dimension; crucial for understanding dimension handling and function branch logic.
    \item[ctx\_units\_code\_excerpt\_002:] excerpt (lines 140-206) of unitsystem.py covering the full \_collect\_factor\_and\_dimension logic including Function branch (lines 192-196); identifies inconsistency in return type for Function.
    \item[ctx\_repro\_script\_001:] reproduce\_issue.py content mirroring the GitHub issue to validate bug and later confirm fix.
\end{description}
\end{tcolorbox}

%% file: alg/alg.tex
\begin{algorithm}[t]
\caption{\sys Agent Loop}
\label{alg:exp_memory_main_msgs_only}
\begin{algorithmic}[1]
\Input Task instruction $u$, maximum steps $T_{\max}$, context threshold $\tau$
\Output Submitted answer $answer$
    \Statex \textbf{Tool set:} $\mathcal{T}=\{\textsc{CompressExperience}(\cdot),\textsc{ReadExperience}(\cdot),\textsc{Finish}(\cdot),\textsc{OtherTool}(\cdot)\}$

\Function{Execute}{$u, T_{\max}, \tau$}
    \State $\mathbf{M}\gets[m_0,u]$ \Comment{System + task; never compressed}
    \State $\mathcal{D}\gets\emptyset$ \Comment{Memory Pool}
    \State $answer\gets\varnothing$
    \For{$t=1$ \textbf{to} $T_{\max}$}
        \State $\mathbf{M}\gets \mathbf{M}\oplus[\textsc{ContextStatus}(\mathbf{M},\tau)]$
        \State $(z_t, c_t)\gets \pi_{\text{agent}}(\mathbf{M})$
        \Comment{$z_t$: thinking, $c_t$: tool call (incl.\ Finish/Compress/Read)}
        \State $\mathbf{M}\gets \mathbf{M}\oplus[z_t, c_t]$

        \If{$c_t = \textsc{CompressExperience}(\mathsf{IndexedSummary},\mathsf{MemoryBlocks})$}
            \For{$(\mathsf{index}, \mathsf{content})$ \textbf{in} $\mathsf{MemoryBlocks}$}
                \State $\mathcal{D}[\mathsf{index}] \leftarrow \mathsf{content}$
            \EndFor
            \State $\mathbf{M}\gets [m_0,u,\mathsf{IndexedSummary}]$
        \ElsIf{$c_t = \textsc{ReadExperience}(\mathsf{index})$}
            \State $o_t \gets \mathcal{D}[\mathsf{index}]$
            \State $\mathbf{M}\gets \mathbf{M}\oplus[o_t]$ \Comment{Retrieved content as a new message}
        \ElsIf{$c_t = \textsc{Finish}(y)$}
            \State $answer\gets y$
            \State \Return $answer$
        \Else
            \State $o_t \gets \textsc{ExecuteTool}(c_t)$
            \State $\mathbf{M}\gets \mathbf{M}\oplus[o_t]$ \Comment{Environment/tool output as a new message}
        \EndIf
    \EndFor
    \State \Return \textsc{MaxStepReached}
\EndFunction
\end{algorithmic}
\end{algorithm}

%% file: contents/evaluation.tex
\section{Theoretical Analysis of Why the \sys Loop Scales to Long-Horizon Agents}
\label{sec:theory}

This section examines the theoretical potential of the \sys agent loop. In particular, we ask whether Indexed Experience Memory can, in principle, support two desirable properties at the same time: preserving optimal decision quality without conditioning on the full message history, and keeping the agent's effective in-context computation bounded as the message history grows. Our goal is not to prove that \sysrl always learns such summaries in practice, but to characterize the regime in which indexed summaries and bounded dereferencing are sufficient to support accurate decisions with bounded working context.

\paragraph{Property 1: Preserving decision quality with bounded dereferencing.}
Let \(M_t^{\mathrm{full}}\) denote the full message history up to step \(t\), and let \((\sigma_t, D_t)\) be the \sys state induced by \(M_t^{\mathrm{full}}\), where \(\sigma_t\) is the in-context indexed summary and \(D_t : \mathsf{index} \mapsto \mathsf{content}\) is the external experience store.

We say that the indexed summary is sufficient for decision making if it does not need to contain all past evidence itself, but can instead point to the small subset of archived evidence that matters for the current decision.

\begin{definition}[Decision-sufficient indexed summary]
We say that \(\sigma_t\) is \emph{\(B\)-bounded decision-sufficient} if there exist an index selector \(g\) and a decision function \(\mu\) such that \(|g(\sigma_t)| \leq B\) and \(\pi^*(\cdot \mid M_t^{\mathrm{full}}) = \mu\!\left(\cdot \mid \sigma_t, \{D_t[i]\}_{i \in g(\sigma_t)}\right)\), where \(\pi^*\) is an optimal policy that conditions on the full message history \(M_t^{\mathrm{full}}\).
\end{definition}

This definition formalizes the desired \sys property. Although the full message history may be arbitrarily long, the optimal action can still be recovered from a compact indexed summary together with at most \(B\) explicitly dereferenced blocks from the archive.

\begin{proposition}[\sys can match a full-context optimal policy]
Let \(J(\pi)\) denote the expected return of policy \(\pi\). Assume that \(\sigma_t\) is \(B\)-bounded decision-sufficient for every step \(t\). Then there exists a \sys policy \(\pi_{\mathrm{IEM}}\) that conditions only on \(\sigma_t\) and uses at most \(B\) calls to \textsc{ReadExperience}\((\cdot)\) per step such that \(J(\pi_{\mathrm{IEM}}) = J(\pi^*)\).
\end{proposition}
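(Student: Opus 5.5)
The plan is a direct construction followed by a trajectory-distribution coupling argument. At each step \(t\), define \(\pi_{\mathrm{IEM}}\) from the objects supplied by \(B\)-bounded decision-sufficiency. We take the index selector \(g\) and decision function \(\mu\) from the definition, and assume, as the definition implicitly allows, that they may depend on \(t\). The policy then proceeds in three stages. First, reading only \(\sigma_t\), it computes the index set \(g(\sigma_t)\). Second, it issues \(\textsc{ReadExperience}(i)\) for each \(i\in g(\sigma_t)\); this takes at most \(B\) calls, and each call appends \(D_t[i]\) to the working context exactly as in Definition~\ref{def:memory}(iv) and \autoref{alg:exp_memory_main_msgs_only}. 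Third, it samples the environment action from \(\mu(\cdot\mid \sigma_t,\{D_t[i]\}_{i\in g(\sigma_t)})\). Every index returned by \(g\) lies in \(\mathrm{dom}(D_t)\), by Definition~\ref{def:indexedsummary}, so each dereference is well defined. The policy therefore conditions only on \(\sigma_t\) and on blocks retrieved through explicit dereferencing, which is the class of policies the statement permits.

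The second step is to show by induction on \(t\) that the law of the environment-level trajectory \((M_t^{\mathrm{full}}, a_t)\) under \(\pi_{\mathrm{IEM}}\) equals its law under \(\pi^*\). For the base case, both policies start from \(M_0^{\mathrm{full}}=[m_0,u]\). For the inductive step, fix any realized history \(M_t^{\mathrm{full}}\). The \sys state \((\sigma_t,D_t)\) is a function of this history, so by decision-sufficiency the action distribution of \(\pi_{\mathrm{IEM}}\) at that history is \(\mu(\cdot\mid\sigma_t,\{D_t[i]\})=\pi^*(\cdot\mid M_t^{\mathrm{full}})\). The environment transition depends only on the history and the environment action, so the next-step joint laws also agree.

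The return \(J\) is an expectation of a functional of this trajectory, so equality of the trajectory laws gives \(J(\pi_{\mathrm{IEM}})=J(\pi^*)\). This is a standard argument: two policies that agree as conditional kernels on every reachable history induce the same path measure, which follows by Ionescu--Tulcea or simply by finite products for a finite horizon \(T_{\max}\).

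The main obstacle is the bookkeeping around memory operations, and I would handle it by fixing conventions up front. The \textsc{ReadExperience} calls, and whatever \textsc{CompressExperience} calls produce \(\sigma_{t+1}\), are internal actions that do not affect the environment. I would treat them as sub-steps within a single decision step \(t\), and I would take the induced state map \(M_t^{\mathrm{full}}\mapsto(\sigma_t,D_t)\) as given by hypothesis. Under these conventions, \(J\) counts only task outcomes, and \(M_t^{\mathrm{full}}\) means the environment-relevant history. The inserted memory messages then neither alter transitions nor rewards. I would state explicitly that, for this idealized comparison, any reward terms or step budgets tied to memory calls are excluded, or are identical for both policies. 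Without that convention, the extra \(\le B\) reads per step could change \(J\), and the equality would fail.
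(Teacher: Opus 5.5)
Your proposal is correct and follows essentially the same route as the paper's proof: define \(\pi_{\mathrm{IEM}}(\cdot\mid\sigma_t,D_t)\) by dereferencing the at most \(B\) indices in \(g(\sigma_t)\) and sampling from \(\mu\), note that this reproduces \(\pi^*(\cdot\mid M_t^{\mathrm{full}})\) at every reachable history, and conclude that the trajectory laws agree, so \(J(\pi_{\mathrm{IEM}})=J(\pi^*)\). Your explicit induction and your conventions make precise what the paper's one-line ``same action distribution, hence same trajectory distribution'' step leaves implicit; these conventions are a fixed \(g,\mu\) per step that does not depend on the particular history, \textsc{ReadExperience} calls treated as non-environmental sub-steps, and memory-related reward terms and step budgets excluded.
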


\noindent
\textit{Interpretation.}
This result says that if the indexed summary plus a bounded number of dereferences capture all decision-relevant evidence, then \sys loses no decision quality relative to an agent that always conditions on the full message history.

\noindent
\textit{Proof sketch.}
At each step, the policy first uses \(g(\sigma_t)\) to select at most \(B\) indices, dereferences the corresponding blocks from \(D_t\), and then applies \(\mu\) to \((\sigma_t, \{D_t[i]\}_{i \in g(\sigma_t)})\). By definition, this reproduces the same action distribution as the optimal full-context policy \(\pi^*\) at every step, and therefore achieves the same expected return. Detailed proof can be found in \autoref{sec:theory_proofs}.

\paragraph{Property 2: Keeping working context bounded as trajectory grows.}
The second property concerns efficiency. \sys is useful only if it keeps the working context small even when the full history becomes very long. Let \(C_t^{\mathrm{full}} \triangleq |M_t^{\mathrm{full}}|\) denote the token length of the full message history. Under \sys, the model does not need to keep all of \(M_t^{\mathrm{full}}\) in its working context. Instead, the working context at step \(t\) consists only of the indexed summary \(\sigma_t\) together with the dereferenced blocks used at that step. We therefore define the working-context length as \(C_t^{\mathrm{work}} \triangleq |\sigma_t| + \sum_{i \in I_t} |D_t[i]|\), where \(I_t\) is the set of indices read at step \(t\).

\begin{proposition}[\sys keeps working context bounded]
\label{th:context_bound}
Assume that for all \(t\), the summary length satisfies \(|\sigma_t| \leq \tau_\sigma\), the number of dereferenced indices satisfies \(|I_t| \leq B\), and each retrieved block satisfies \(|D_t[i]| \leq L\). Then \(C_t^{\mathrm{work}} \leq \tau_\sigma + BL \triangleq C_{\mathrm{work}}^{\max}\). Consequently, the compression ratio \(\rho_t \triangleq \frac{C_t^{\mathrm{full}}}{C_t^{\mathrm{work}}}\) satisfies \(\rho_t \geq \frac{C_t^{\mathrm{full}}}{C_{\mathrm{work}}^{\max}}\). In particular, as the full message history length \(C_t^{\mathrm{full}}\) grows, the working context remains bounded while the compression ratio grows without bound.
\end{proposition}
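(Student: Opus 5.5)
The argument is direct: bound each term in the definition of \(C_t^{\mathrm{work}}\), sum, and then invert the bound to control the compression ratio. I will use only the three hypotheses and the definitions \(C_t^{\mathrm{work}} \triangleq |\sigma_t| + \sum_{i\in I_t}|D_t[i]|\) and \(\rho_t \triangleq C_t^{\mathrm{full}}/C_t^{\mathrm{work}}\). Nothing from the earlier proposition on matching \(\pi^*\) is needed.

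\textbf{Step 1 (working-context bound).} Fix \(t\). Each summand in \(\sum_{i\in I_t}|D_t[i]|\) is at most \(L\) by hypothesis. The sum has \(|I_t|\le B\) terms, so
\[
\sum_{i\in I_t}|D_t[i]| \le |I_t|\,L \le BL .
\]
If \(I_t=\emptyset\), the sum is \(0\) and the bound still holds. Adding \(|\sigma_t|\le\tau_\sigma\) gives \(C_t^{\mathrm{work}}\le \tau_\sigma+BL = C_{\mathrm{work}}^{\max}\). This bound is uniform in \(t\) and does not depend on \(C_t^{\mathrm{full}}\).

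\textbf{Step 2 (compression ratio).} Provided \(C_t^{\mathrm{work}}>0\), the map \(x\mapsto C_t^{\mathrm{full}}/x\) is nonincreasing on \((0,\infty)\) because \(C_t^{\mathrm{full}}\ge 0\). Applying it to Step 1 yields \(\rho_t \ge C_t^{\mathrm{full}}/C_{\mathrm{work}}^{\max}\).

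\textbf{Step 3 (asymptotics).} Because \(C_{\mathrm{work}}^{\max}\) is a fixed constant, letting \(C_t^{\mathrm{full}}\to\infty\) makes the lower bound, and hence \(\rho_t\), diverge. Meanwhile \(C_t^{\mathrm{work}}\) stays at most \(C_{\mathrm{work}}^{\max}\).

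The only delicate point is the well-definedness of \(\rho_t\) in Step 2, which requires \(C_t^{\mathrm{work}}>0\). I would handle it by noting that the indexed summary is a nonempty in-context message, so \(|\sigma_t|\ge 1\). Alternatively, one can adopt the convention that the ratio is \(+\infty\) when the denominator vanishes. Either way the inequality survives, and the rest of the argument is elementary.
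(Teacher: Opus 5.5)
Your proposal is correct and matches the paper's proof step for step: you bound \(|\sigma_t|\) by \(\tau_\sigma\) and the dereferenced sum by \(|I_t|L\le BL\), invert to get \(\rho_t\ge C_t^{\mathrm{full}}/C_{\mathrm{work}}^{\max}\), and then let \(C_t^{\mathrm{full}}\to\infty\). Your remarks on the empty-\(I_t\) case and on needing \(C_t^{\mathrm{work}}>0\) for \(\rho_t\) to be well defined are small refinements that the paper leaves implicit.
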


\noindent
\textit{Interpretation.}
This result formalizes the central efficiency advantage of \sys. Even as the full message history keeps growing, the agent only needs to maintain a bounded working context, provided that the summary remains compact and only a bounded number of archived blocks are explicitly dereferenced at each step. Since the system prompt \(m_0\) and task instruction \(u\) are fixed, bounded working context also implies bounded effective in-context computation. Detailed proof can be found in \autoref{sec:theory_proofs}.

\section{Empirical Results for \sysrl}
\label{sec:results}

The previous section establishes the theoretical potential of the \sys loop: in principle, indexed experience memory can preserve decision quality while keeping the working context bounded. We now turn to empirical evaluation to examine whether this potential can be realized in practice by our training method \sysrl.

\subsection{Experiments Setup}
\label{sec:setup}

\noindent
\textbf{LLM Models.}
We use Qwen3-30B-A3B-Thinking-2507~\citep{yang2025qwen3} in our experiments. It is a Mixture-of-Experts (MoE) model with about 30B total parameters, 128 experts per layer with top-8 routing, and approximately 3B active parameters per token. It has strong tool-use understanding and instruction-following capabilities, which allow it to understand the memory tools provided in the prompt and produce useful \textsc{CompressExperience} and \textsc{ReadExperience} usage patterns before RL training. 

\input{figtex/results}

\input{figtex/eval}

\noindent
\textbf{Environments.} The environment used in our experiments is a \emph{modified and harder version} of ALFWorld~\citep{shridhar2020alfworld}. We make the following modifications to make it more suitable for evaluating long-horizon memory agents: (1) \emph{Hidden admissible commands}: the default ALFWorld observation includes a list of all valid actions with exact object IDs, which trivializes navigation. We remove this from the observation, requiring the agent to discover valid actions through exploration. (2) \emph{Hidden initial observation}: the room description containing all location IDs is stripped from the initial observation. The agent must explicitly call the look'' action to observe its surroundings, placing location IDs in the conversation history rather than the fixed system context. (3) \emph{Limited look}: the look'' action is restricted to once per episode. After the first use, the location IDs can only be recovered through \textsc{ReadExperience}, creating an explicit dependency on the memory retrieval mechanism. (4) \emph{Summary truncation}: the summary produced by \textsc{CompressExperience} is truncated to 300 tokens, forcing the agent to store detailed information (e.g., object IDs) in structured \texttt{db\_blocks} and retrieve them via \textsc{ReadExperience} rather than embedding everything in the summary. The training set contains 3553 tasks. Detailed prompts used can be found in \autoref{sec:prompts}.

\noindent
\textbf{Training Details.} Our implementation is based on the open-sourced LLM RL framework Slime~\citep{slime_github}. We use INT4 quantization for inference rollout and quantization-aware training (QAT) for the backward pass, where weights are quantized to INT4 for forward computation but gradients are accumulated in BF16. To mitigate the inference-training mismatch introduced by multi-turn agent interactions, we adopt truncated importance sampling (TIS) with a clip ratio of 2.0 for loss calculation and a token-in-token-out agent loop that avoids re-tokenization between turns. 
Training uses the Adam optimizer with learning rate $5 \times 10^{-6}$, weight decay 0.1, and a KL divergence penalty ($\lambda_{\text{KL}}=0.001$) against the frozen reference model. The context window size and the context penalty threshold used in training are 32K and 8K, respectively.
The batch size is 32 (i.e., 32 prompts for each rollout step).
The group size for GRPO is 8 (i.e., 8 rollout samples per prompt).

\subsection{Effectiveness}

We evaluate the effectiveness of \sysrl by comparing the \sys agent with and without RL training on the modified ALFWorld benchmark. The training dynamics already show that the proposed learning framework is effective. As shown in \autoref{fig:alfworld_success}, the rollout task success rate increases from approximately 20\% to over 90\% during training. At the same time, \autoref{fig:alfworld_penalty} shows that the total penalty improves from around $-0.4$ to approximately $-0.1$. These trends indicate that the agent not only becomes substantially better at completing tasks, but also learns to manage its working context more effectively through the memory actions provided by \sys.

The final evaluation results in \autoref{fig:eval} further confirm this improvement. \sysrl increases the task success rate from 24.22\% to 85.61\%, while reducing the peak working context length from 16934.46 to 9634.47 tokens. This corresponds to more than a $3.5\times$ improvement in task success together with an approximately 43\% reduction in peak working context. These results show that the gain from \sysrl is not simply due to more aggressive compression. Instead, RL teaches the agent how to use indexed experience memory in a way that better supports downstream reasoning and decision making under a constrained context budget. Note that the context penalty
threshold used in training is 8000. The results show that the peak working context length after \sysrl training is close to the threshold.

\autoref{fig:memory_usage} provides additional insight into the learned memory behavior. After training, the mean number of \texttt{CompressExperience} calls per episode decreases from about 6.5 to about 3, while the mean number of \texttt{ReadExperience} calls increases from about 1 to around 6--7. This behavioral shift is important. It suggests that \sysrl does not merely encourage the agent to compress history more frequently. Rather, the learned policy compresses more selectively and increasingly relies on explicit retrieval from the external experience store when previously observed evidence becomes relevant again. In other words, RL shifts the agent from repeatedly rewriting context toward building a reusable indexed memory that can be dereferenced precisely when needed. This is exactly the intended operating mode of \sys, and it explains why the trained agent achieves substantially better long-horizon performance with a significantly smaller working context.

\input{figtex/eval_behaviors}

%% file: figtex/results.tex
\begin{figure}[t]
    \centering
    \includegraphics[width=0.75\textwidth]{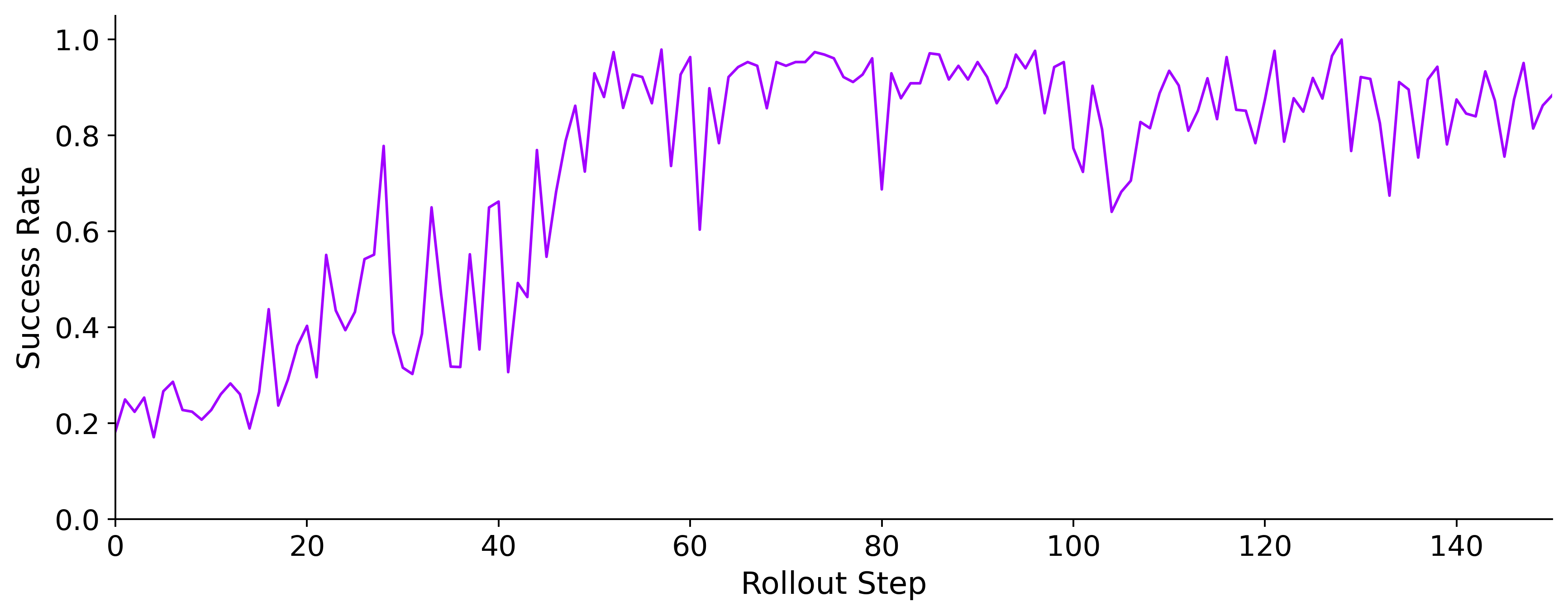}
    \caption{Task success rates for rollouts during training. The agent's task success rate improves from approximately 20\% to over 90\%, demonstrating that \sysrl training effectively teaches the model to solve tasks using \sys agent loop.}\label{fig:alfworld_success}
\end{figure}

\begin{figure}[t]
    \centering
    \includegraphics[width=0.75\textwidth]{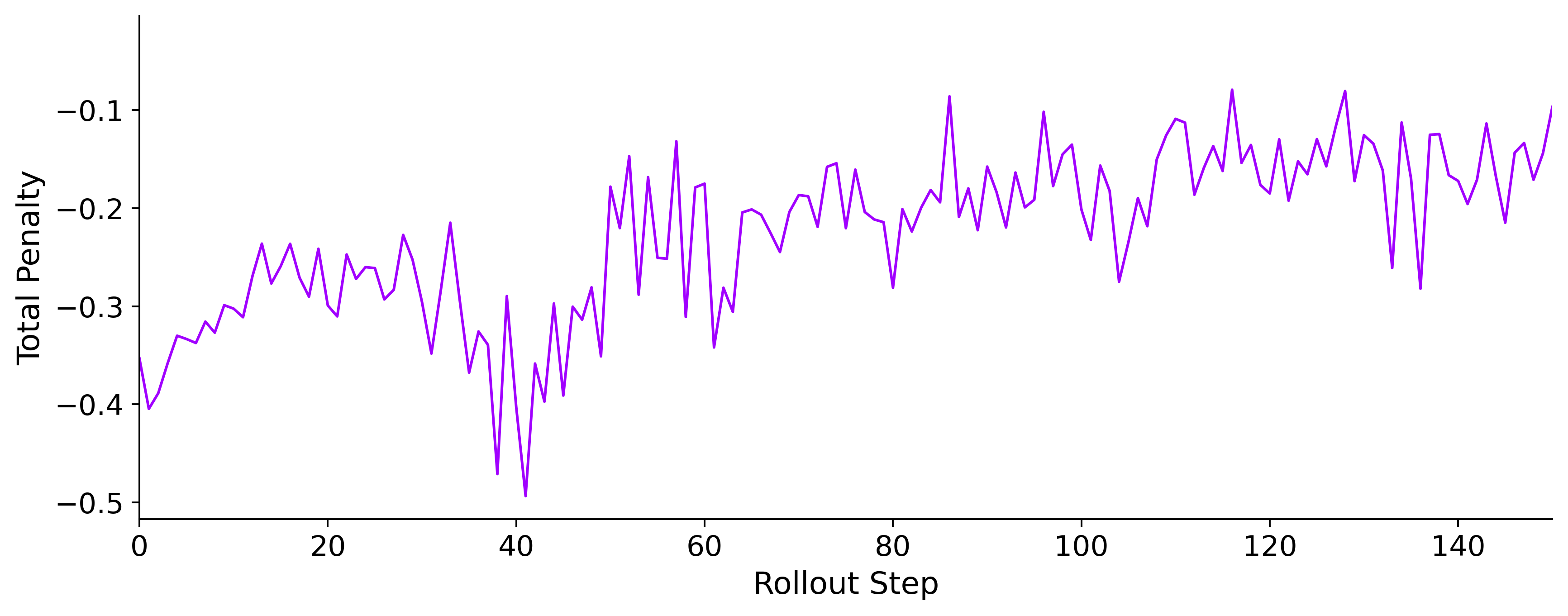}
    \caption{Total penalty for rollouts during training. The penalty decreases in magnitude from $-0.4$ to approximately $-0.1$, showing that the agent learns better task execution and strategically squeezing the peak working context length using \textsc{CompressExperience} and \textsc{ReadExperience}.
    }\label{fig:alfworld_penalty}
\end{figure}

%% file: figtex/eval.tex
\begin{figure}[t]
    \centering
    \begin{subfigure}[b]{0.38\textwidth}
        \centering
        \includegraphics[width=\textwidth]{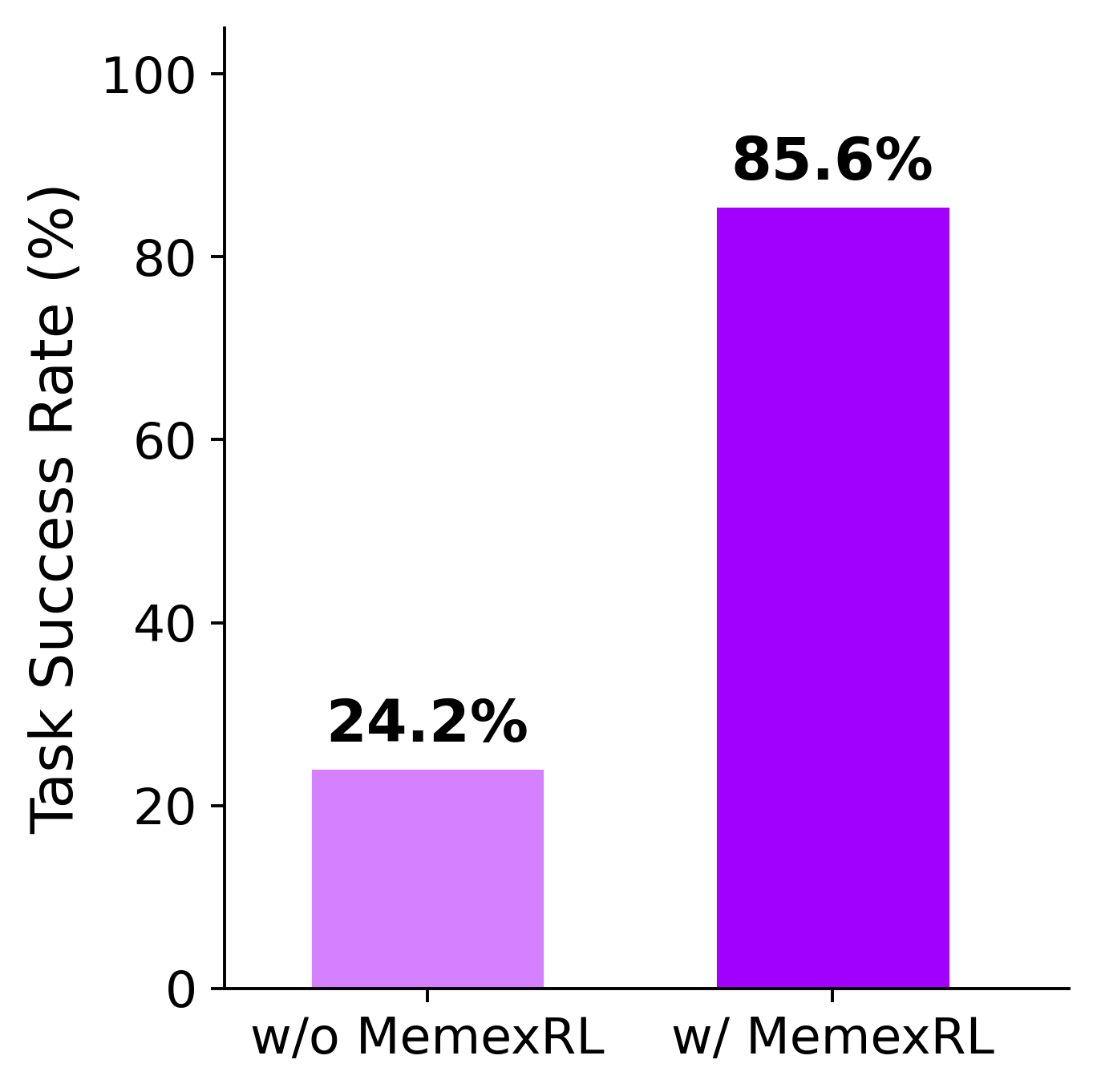}
        \caption{Task success rate.}
        \label{fig:bar_success}
    \end{subfigure}
    \hfill
    \begin{subfigure}[b]{0.38\textwidth}
        \centering
        \includegraphics[width=\textwidth]{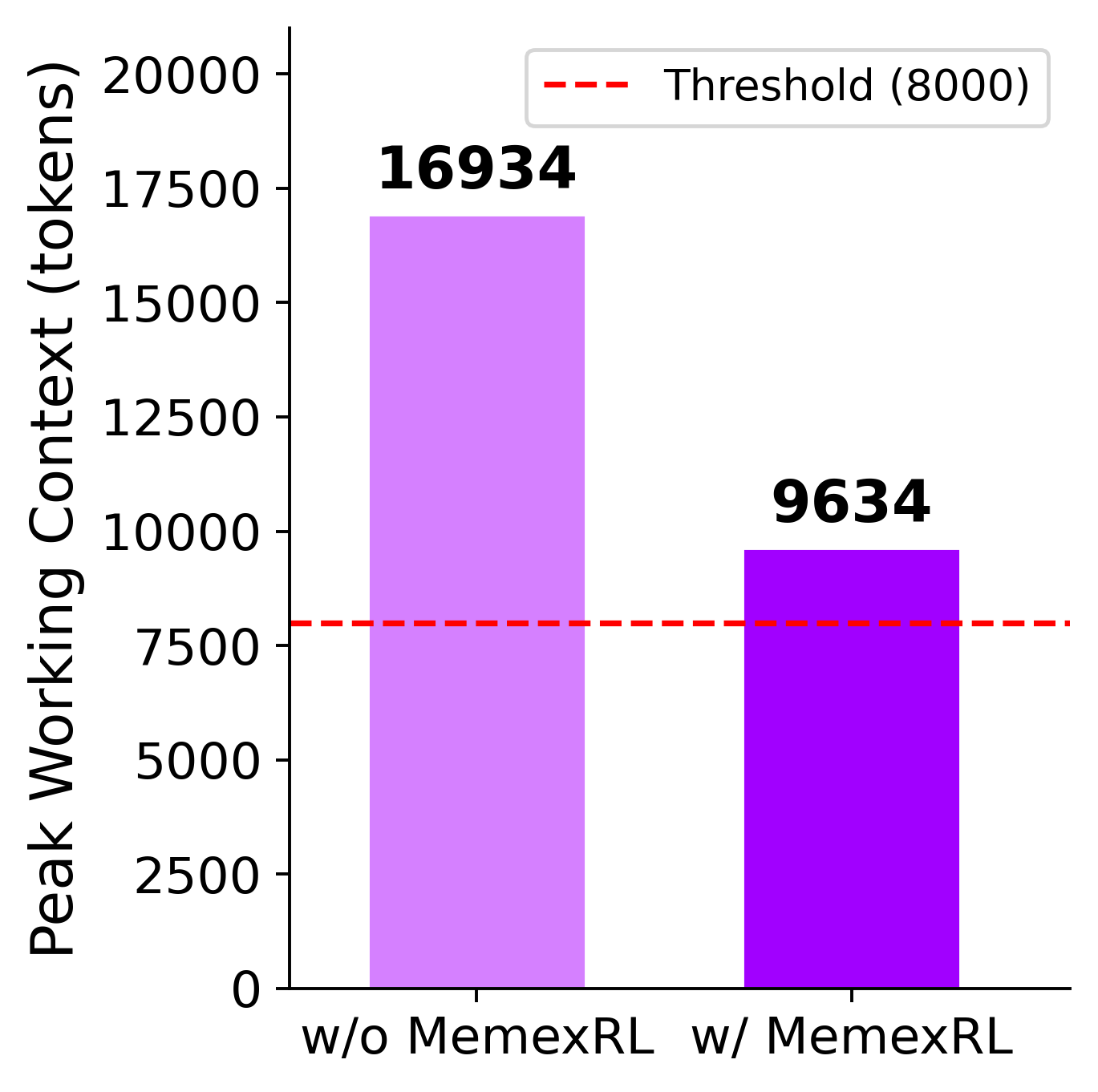}
        \caption{Peak working context length.}
        \label{fig:bar_context}
    \end{subfigure}
    \caption{Effectiveness of \sysrl. (a) Task success rate improves from 24.2\% to 85.6\%. (b) Peak working context length reduces from 16,934 to 9,634 tokens, approaching the penalty threshold of 8,000 tokens.}
    \label{fig:eval}
\end{figure}

%% file: figtex/eval_behaviors.tex
\begin{figure}[t]
    \centering
    \begin{subfigure}[b]{0.48\textwidth}
        \centering
        \includegraphics[width=\textwidth]{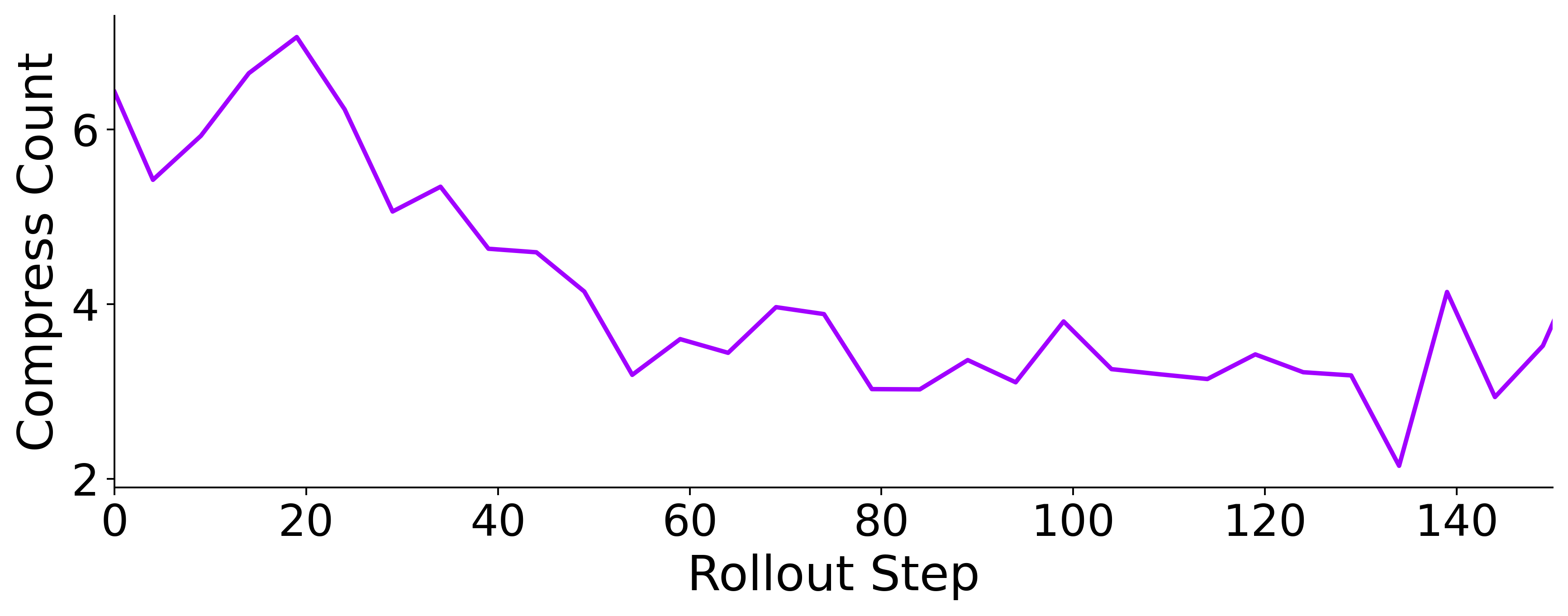}
        \caption{Mean \textsc{CompressExperience} calls per episode.}
        \label{fig:eval_compress}
    \end{subfigure}
    \hfill
    \begin{subfigure}[b]{0.48\textwidth}
        \centering
        \includegraphics[width=\textwidth]{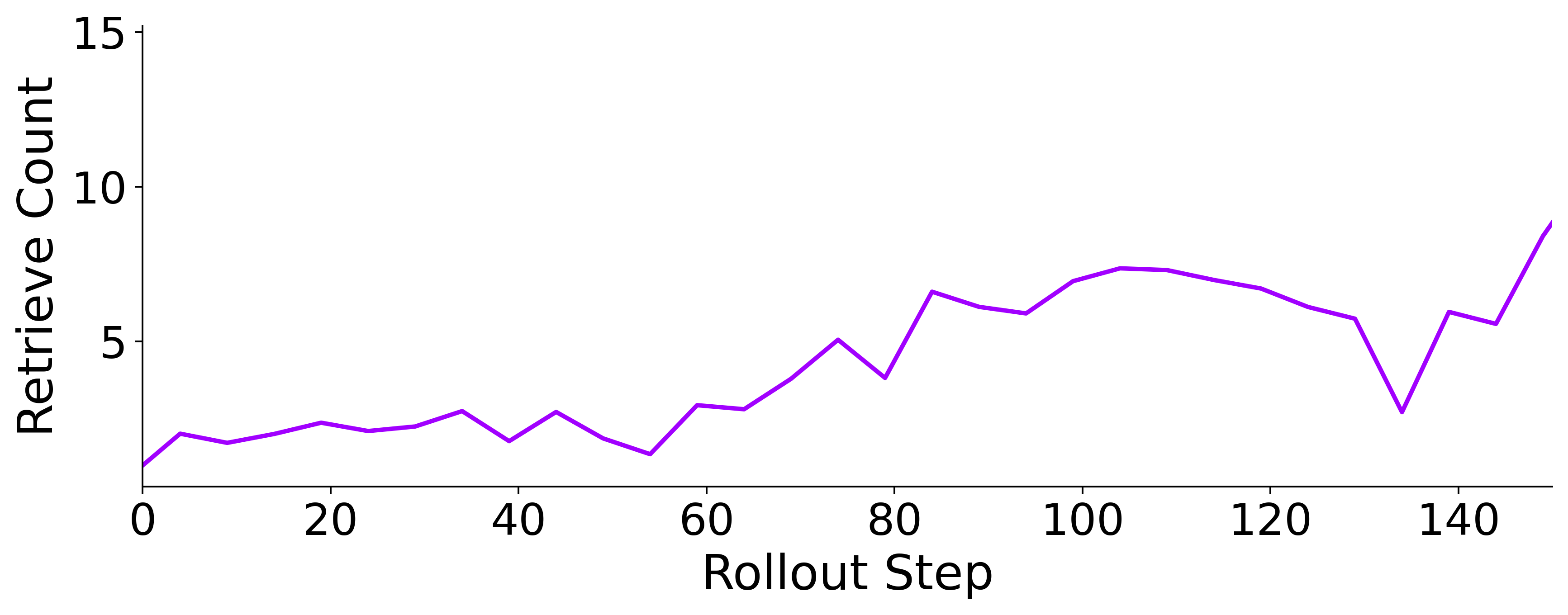}
        \caption{Mean \textsc{ReadExperience} calls per episode.}
        \label{fig:eval_retrieve}
    \end{subfigure}
    \caption{Memory tool usage on the evaluation set during training. (a) Compress count decreases from 6.5 to 3 as the agent completes tasks more efficiently. (b) Retrieve count increases from 1 to 6--7, showing that RL reinforces retrieval behavior rather than suppressing it.}
    \label{fig:memory_usage}
    \vspace{-0.1cm}
\end{figure}

%% file: contents/conclusion.tex
\section{Conclusion}
\label{sec:conclusion}

We presented \sys, a long-horizon tool-using agent that operates under fixed context windows by separating a
compact in-context \emph{indexed summary} from full-fidelity artifacts stored in an external experience database under
stable indices. This \emph{Indexed Experience Memory} turns long trajectories into a pointer-heavy workflow, enabling
the agent to keep actionable state in context while dereferencing exact evidence only when needed. We further
introduced \sysrl, which learns both compression and retrieval behaviors with reward shaping tailored to
indexed memory usage and segmented trajectory processing for episodes with multiple compressions. Across challenging
long-horizon tasks, \sys improves task success while
using a significant smaller working context. These results suggest that learning
how to summarize, index, and dereference experience is a complementary scaling axis for building more persistent and
reliable LLM agents.

%% file: contents/appendix.tex
\appendix

\section{Additional Proofs for Theoretical Analysis}
\label{sec:theory_proofs}

In this appendix, we provide full proofs for the two theoretical results in \autoref{sec:theory}. These results formalize the two central properties of \sys emphasized in the main paper: preserving decision quality through bounded explicit dereferencing, and keeping working context bounded even as the full message history grows.

\subsection{Proof of Optimality Equivalence with Bounded Dereferences (Proposition 1)}

\begin{proof}
Fix an arbitrary step \(t\) and an arbitrary full message history \(M_t^{\mathrm{full}}\). Let \((\sigma_t, D_t)\) be the \sys state induced by \(M_t^{\mathrm{full}}\). By the \(B\)-bounded decision-sufficiency assumption, there exists a selector \(g\) such that the set of indices \(I_t \triangleq g(\sigma_t)\) satisfies \(|I_t| \leq B\). Let the corresponding dereferenced content be denoted by \(R_t \triangleq \{D_t[i]\}_{i \in I_t}\).

Again by assumption, the optimal full-context policy satisfies \(\pi^*(\cdot \mid M_t^{\mathrm{full}}) = \mu(\cdot \mid \sigma_t, R_t)\). We now define a \sys policy \(\pi_{\mathrm{IEM}}\) by first selecting the indices \(I_t = g(\sigma_t)\), then reading the associated archived blocks from \(D_t\), and finally applying the decision function \(\mu\) to the pair \((\sigma_t, R_t)\). Formally, for every \sys state \((\sigma_t, D_t)\), define \(\pi_{\mathrm{IEM}}(\cdot \mid \sigma_t, D_t) \triangleq \mu\!\left(\cdot \mid \sigma_t, \{D_t[i]\}_{i \in g(\sigma_t)}\right)\).

By construction, for every message history \(M_t^{\mathrm{full}}\), the action distribution induced by \(\pi_{\mathrm{IEM}}\) is exactly the same as that induced by \(\pi^*\). In other words, \(\pi_{\mathrm{IEM}}(\cdot \mid \sigma_t, D_t) = \pi^*(\cdot \mid M_t^{\mathrm{full}})\) for all reachable states. Therefore, both policies induce the same distribution over actions at every step, and hence the same distribution over trajectories under the same environment dynamics. It follows that they achieve the same expected return, namely \(J(\pi_{\mathrm{IEM}}) = J(\pi^*)\).

Finally, the number of dereferences used by \(\pi_{\mathrm{IEM}}\) at each step is at most \(|g(\sigma_t)| \leq B\) by assumption. Therefore, \(\pi_{\mathrm{IEM}}\) is a valid \sys policy with at most \(B\) calls to \textsc{ReadExperience}\((\cdot)\) per step, and it matches the performance of the optimal full-context policy.
\end{proof}

\subsection{Proof of Bounded Working Context (Proposition 2)}

\begin{proof}
By definition, the working-context length at step \(t\) is \(C_t^{\mathrm{work}} = |\sigma_t| + \sum_{i \in I_t} |D_t[i]|\). We bound each term separately.

By assumption, the indexed summary length is bounded as \(|\sigma_t| \leq \tau_\sigma\). Also by assumption, the number of dereferenced blocks is bounded by \(|I_t| \leq B\), and each retrieved block has length at most \(L\). Therefore, \(\sum_{i \in I_t} |D_t[i]| \leq |I_t| \cdot L \leq BL\).

Substituting these bounds into the definition of \(C_t^{\mathrm{work}}\), we obtain \(C_t^{\mathrm{work}} \leq \tau_\sigma + BL\). Define the right-hand side as \(C_{\mathrm{work}}^{\max}\). This proves that the working-context length is uniformly bounded for all \(t\).

Next, by definition, the compression ratio is \(\rho_t = \frac{C_t^{\mathrm{full}}}{C_t^{\mathrm{work}}}\), where \(C_t^{\mathrm{full}} = |M_t^{\mathrm{full}}|\) is the full message history length. Since \(C_t^{\mathrm{work}} \leq C_{\mathrm{work}}^{\max}\), we have \(\rho_t = \frac{C_t^{\mathrm{full}}}{C_t^{\mathrm{work}}} \geq \frac{C_t^{\mathrm{full}}}{C_{\mathrm{work}}^{\max}}\). Thus, the compression ratio grows at least linearly with the full message history length.

In particular, as the message history becomes longer and \(C_t^{\mathrm{full}} \to \infty\), the denominator \(C_t^{\mathrm{work}}\) remains bounded by the constant \(C_{\mathrm{work}}^{\max}\), while the numerator \(C_t^{\mathrm{full}}\) continues to grow. Therefore, \(\rho_t \to \infty\). This shows that \sys can keep working context bounded even as the total message history grows without bound.

Since the system prompt \(m_0\) and task instruction \(u\) are fixed, bounded working context also implies bounded effective in-context computation.
\end{proof}

\section{Details of the Used Prompts}
\label{sec:prompts}

The detailed used prompts in our experiments are as follows:

\begin{tcolorbox}[colback=gray!5, colframe=gray!50, title=Full System Prompt for Modified ALFWorld with Memory Tools, fonttitle=\small\bfseries, breakable, left=2mm, right=2mm]
\scriptsize
\begin{verbatim}
You are an intelligent agent in a household environment (ALFWorld).

# Your Goal
Complete household tasks by navigating rooms, interacting with objects, and manipulating
them appropriately.

# Available Tools

1. **execute_action** - Execute an action in the environment
   - Parameter: action (string) - The action to execute in natural language
   - Returns: Observation of the result

2. **finish** - Indicate task completion
   - Parameter: success (boolean) - Whether the task was completed successfully

# Action Format
Actions are natural language commands. Common actions include:
- Navigation: "go to desk 1", "go to drawer 2", "go to fridge 1"
- Picking up: "pick up book 1", "pick up apple 1"
- Placing: "put book 1 in/on desk 1", "put apple 1 in fridge 1"
- Opening/Closing: "open drawer 1", "close fridge 1"
- Using: "use lamp 1" (turn on), "cool apple 1 with fridge 1", "heat potato 1 with
  microwave 1"
- Looking: "examine desk 1", "look"

# Tips
1. Pay attention to the task description - it tells you exactly what to do
2. Use "go to [location]" to navigate to objects
3. You must be at a location before you can interact with it
4. Some containers (drawers, fridges) need to be opened before you can see/use their
   contents
5. Be systematic: find the object, pick it up, find the destination, put it down

##############################################################################
#                         MANDATORY REQUIREMENTS                              #
##############################################################################

>>> REQUIREMENT 1: YOU MUST CALL A TOOL IN EVERY RESPONSE <<<
- Every response MUST contain a tool call
- Plain text responses WITHOUT a tool call will be REJECTED

>>> REQUIREMENT 2: USE execute_action FOR ALL INTERACTIONS <<<
- Use execute_action to perform actions in the environment
- Only use finish when you have completed the task

<IMPORTANT>
- You MUST provide consise reasoning BEFORE every action.
</IMPORTANT>

=== CRITICAL: THREE OBJECTIVES ===

You have THREE equally important goals:
1. SOLVE THE TASK correctly
2. KEEP working context UNDER the threshold (shown in [Context Status] after EVERY
   observation)
3. NEVER make redundant tool calls (same tool + same arguments without file/status
   changes in between)

SEVERE PENALTIES (can nullify your task success reward):
- Context overflow: If working > threshold, you receive a SEVERE PENALTY that can
  completely offset solving the task
- Redundant tool calls: Calling the SAME tool with IDENTICAL arguments twice (without
  modifying files/status in between) results in a SEVERE PENALTY
- These penalties are AS IMPORTANT as solving the task - poor memory management can
  make a solved task worth ZERO

MANDATORY PRACTICES:
- Monitor [Context Status: working tokens=X, threshold=Z] after EVERY observation
- Compress BEFORE working exceeds threshold (don't wait until it's too late!)
- When compressing, store BROAD coverage in db_blocks - include everything you might
  need later

CRITICAL - After compression:
- Compressed messages are DELETED from context. The ONLY way to access them is
  ReadExperience.
- If you need past information, you MUST call ReadExperience(db_index) - re-running
  tools without file/status changes is forbidden and penalized.

-- BEGIN FUNCTION: CompressExperience --
Description:
Compress working context to database for later retrieval. Replaces all messages
(except system prompt and task description) with your summary.

Usage:
  * Check [Context Status: working tokens=X, threshold=Z] at the end of each
    observation
  * Strongly recommended when working > 0.8 * threshold
  * Exceeding threshold will result in penalty
  * After compression, use ReadExperience to get saved content instead of re-running
    tools
  * When compressing multiple times: include ALL previous indices in your new summary
    (copy them over), then add new ones

Parameters:
  1. summary (string, required)
     Index map listing ALL stored indices (both old and new). Format:
     - <db_index> - <what it contains>
     - <db_index> - <what it contains>
     Include current status and next steps at the end.

  2. db_blocks (array, required)
     List of content blocks to store. Two options:

     Option A - Write content yourself:
       * db_index (string): Unique key, e.g. "ctx_code_001"
       * db_content (string): Content you write/summarize

     Option B - System auto-extracts from current conversation:
       The system finds text between your anchors and saves it automatically.
       * db_index (string): Unique key
       * start_anchor (string): REQUIRED - exact text where extraction STARTS
       * mid_anchor (string): REQUIRED - exact text that MUST appear in the middle
       * end_anchor (string): REQUIRED - exact text where extraction ENDS
       ALL THREE anchors are REQUIRED. Missing any anchor = failure.

       IMPORTANT for anchors:
       - Choose your own anchors that uniquely identify the content boundaries
       - start_anchor: unique text at the START of what you want to extract
       - mid_anchor: unique text somewhere in the MIDDLE (for verification)
       - end_anchor: unique text at the END of what you want to extract
       - Keep anchors SHORT (20-100 chars), NOT entire code blocks
       - Good: "def _check_required", "raise ValueError", "return result"
       - Bad: copying 10+ lines of code (whitespace errors cause failures)

Tip: Use Option A for summaries. Use Option B only for large verbatim outputs (test
results, stack traces) where you want exact copy.

Example:
<tool_call>
{"name": "CompressExperience", "arguments": {"summary": "Index map:\n- ctx_data_001
- Brief description of what's stored\n- ctx_data_002 - Brief description of what's
stored\nStatus: Current progress and next steps", "db_blocks": [{"db_index":
"ctx_data_001", "db_content": "Precise details, exact IDs, full content..."},
{"db_index": "ctx_data_002", "db_content": "More precise details..."}]}}
</tool_call>

IMPORTANT:
- summary: Keep descriptions SHORT (what type of data, not the data itself)
- db_blocks: Store PRECISE details you'll need later (exact IDs, full content,
  specific values)
- After compression, use ReadExperience(db_index) to retrieve precise details

-- END FUNCTION --

-- BEGIN FUNCTION: ReadExperience --
Description:
Retrieve previously compressed content by index.

Usage:
  * Use when you need exact details stored during compression
  * Available indices shown in [Context Status] and your summary's index map
  * Always retrieve instead of re-running tools for same information

Parameters:
  1. db_index (string, required)
     The index to retrieve. Must match exactly.

Example:
<tool_call>
{"name": "ReadExperience", "arguments": {"db_index": "ctx_code_001"}}
</tool_call>

-- END FUNCTION --

##############################################################################
#                    MEMORY MANAGEMENT FOR ALFWORLD                          #
##############################################################################

CRITICAL: How to use CompressExperience effectively in ALFWorld:

1. **summary** must contain ONLY short descriptions and index map. NEVER put raw IDs
   in summary!
   - BAD: "ctx_cabinet_001 - cabinet_bar__minus_00_dot_36_bar__plus_00_dot_38..."
   - BAD: "Locations: countertop_bar__minus_00_dot_28_bar__plus_00_dot_79..."
   - GOOD: "ctx_locations - All 20 location IDs\nctx_progress - Task progress"
   Summary is truncated to save space. Any IDs in summary WILL BE LOST.

2. **db_blocks** is the ONLY safe place to store exact IDs:
   - Store ALL location IDs in db_blocks (e.g., db_index="ctx_locations")
   - Store object IDs you've found (e.g., db_index="ctx_objects")
   - These IDs are required to interact with objects in future actions

3. **After compression**, you MUST call ReadExperience(db_index) to get IDs back:
   - Summary does NOT contain IDs (they are truncated)
   - The ONLY way to get exact IDs is ReadExperience

3. **After compression**, call ReadExperience(db_index) to retrieve precise IDs
   - Don't try to remember IDs from memory - they are deleted after compression
   - Always retrieve before taking actions that need specific location names

ALFWORLD-SPECIFIC EXAMPLE:

Step 1 - Compress (store IDs in db_blocks, NOT in summary):
<tool_call>
{"name": "CompressExperience", "arguments": {
  "summary": "Index map:\n- ctx_locations - All room location IDs\n- ctx_progress
  - Task status\nStatus: Found butterknife, need to clean and place on table",
  "db_blocks": [
    {"db_index": "ctx_locations", "db_content": "countertop_bar__minus_00_dot_28_bar
    __plus_00_dot_79_bar__plus_01_dot_93\ndrawer_bar__minus_00_dot_33_bar__plus_00
    _dot_32_bar__plus_01_dot_72\nsinkbasin_bar__plus_01_dot_13_bar__plus_00_dot_00
    _bar__minus_01_dot_33\ndiningtable_bar__plus_01_dot_02_bar__plus_00_dot_00_bar
    __plus_01_dot_61"},
    {"db_index": "ctx_progress", "db_content": "Task: put clean butterknife on
    diningtable\nFound: butterknife at countertop\nInventory: butterknife_bar__minus
    _00_dot_77_bar__plus_00_dot_90_bar__minus_01_dot_68\nNext: go to sinkbasin to
    clean, then to diningtable"}
  ]
}}
</tool_call>

Step 2 - After compression, retrieve IDs before navigating:
<tool_call>
{"name": "ReadExperience", "arguments": {"db_index": "ctx_locations"}}
</tool_call>
-> Returns: "countertop_bar__...\ndrawer_bar__...\nsinkbasin_bar__...\n
   diningtable_bar__..."
Now you can use these IDs: execute_action("go to sinkbasin_bar__plus_01_dot_13...")

This returns the exact cabinet IDs so you don't re-check the same locations.

# Tool Call Format
Use the following JSON format inside <tool_call> tags:

<tool_call>
{"name": "tool_name", "arguments": {"param1": "value1"}}
</tool_call>

Examples:
<tool_call>
{"name": "execute_action", "arguments": {"action": "go to desk 1"}}
</tool_call>

<tool_call>
{"name": "execute_action", "arguments": {"action": "pick up book 1"}}
</tool_call>

<tool_call>
{"name": "finish", "arguments": {"success": true}}
</tool_call>
\end{verbatim}
\end{tcolorbox}

%% file: reference.bib
@inproceedings{yao2023react,
  title={React: Synergizing reasoning and acting in language models},
  author={Yao, Shunyu and Zhao, Jeffrey and Yu, Dian and Du, Nan and Shafran, Izhak and Narasimhan, Karthik and Cao, Yuan},
  booktitle={International Conference on Learning Representations (ICLR)},
  year={2023}
}

@article{yang2025qwen3,
  title={Qwen3 technical report},
  author={Yang, An and Li, Anfeng and Yang, Baosong and Zhang, Beichen and Hui, Binyuan and Zheng, Bo and Yu, Bowen and Gao, Chang and Huang, Chengen and Lv, Chenxu and others},
  journal={arXiv preprint arXiv:2505.09388},
  year={2025}
}

@article{chen2025minimax,
  title={MiniMax-M1: Scaling Test-Time Compute Efficiently with Lightning Attention},
  author={Chen, Aili and Li, Aonian and Gong, Bangwei and Jiang, Binyang and Fei, Bo and Yang, Bo and Shan, Boji and Yu, Changqing and Wang, Chao and Zhu, Cheng and others},
  journal={arXiv preprint arXiv:2506.13585},
  year={2025}
}

@misc{anthropic2025claude4,
  title   = {Introducing Claude 4},
  author  = {Anthropic},
  year    = {2025},
  url     = {https://www.anthropic.com/news/claude-4
}
}

@article{zeng2026glm,
  title={GLM-5: from Vibe Coding to Agentic Engineering},
  author={Zeng, Aohan and Lv, Xin and Hou, Zhenyu and Du, Zhengxiao and Zheng, Qinkai and Chen, Bin and Yin, Da and Ge, Chendi and Xie, Chengxing and Wang, Cunxiang and others},
  journal={arXiv preprint arXiv:2602.15763},
  year={2026}
}

@article{team2026kimi,
  title={Kimi K2. 5: Visual Agentic Intelligence},
  author={Team, Kimi and Bai, Tongtong and Bai, Yifan and Bao, Yiping and Cai, SH and Cao, Yuan and Charles, Y and Che, HS and Chen, Cheng and Chen, Guanduo and others},
  journal={arXiv preprint arXiv:2602.02276},
  year={2026}
}

@misc{openai2025gpt5,
  title   = {Introducing gpt-5},
  author  = {OpenAI},
  year    = {2025},
  url     = {https://openai.com/index/introducing-gpt-5/
}
}

@misc{google2025Gemini3ProModelCard,
  title   = {Gemini-3-Pro-Model-Card},
  author  = {Google DeepMind},
  year    = {2025},
  url     = {https://storage.googleapis.com/deepmind-media/Model-Cards/Gemini-3-Pro-Model-Card.pdf
}
}

@article{liu2026klong,
  title={KLong: Training LLM Agent for Extremely Long-horizon Tasks},
  author={Liu, Yue and Hu, Zhiyuan and Sung, Flood and Zhang, Jiaheng and Hooi, Bryan},
  journal={arXiv preprint arXiv:2602.17547},
  year={2026}
}

@article{lu2025scaling,
  title={Scaling llm multi-turn rl with end-to-end summarization-based context management},
  author={Lu, Miao and Sun, Weiwei and Du, Weihua and Ling, Zhan and Yao, Xuesong and Liu, Kang and Chen, Jiecao},
  journal={arXiv preprint arXiv:2510.06727},
  year={2025}
}

@article{teyler1986hippocampal,
  title={The hippocampal memory indexing theory.},
  author={Teyler, Timothy J and DiScenna, Pascal},
  journal={Behavioral neuroscience},
  volume={100},
  number={2},
  pages={147},
  year={1986},
  publisher={American Psychological Association}
}

@article{andy1998extended,
  title={The extended mind},
  author={Andy, Clark and David, Chalmers},
  journal={Analysis},
  volume={58},
  number={1},
  pages={7--19},
  year={1998}
}

@inproceedings{Zhong-AAAI2024,
  title={Memorybank: Enhancing large language models with long-term memory},
  author={Zhong, Wanjun and Guo, Lianghong and Gao, Qiqi and Ye, He and Wang, Yanlin},
  booktitle={Proceedings of the AAAI Conference on Artificial Intelligence},
  volume={38},
  number={17},
  pages={19724--19731},
  year={2024}
}

@article{Packer-Arxiv2023,
  title={MemGPT: Towards LLMs as Operating Systems.},
  author={Packer, Charles and Fang, Vivian and Patil, Shishir\_G and Lin, Kevin and Wooders, Sarah and Gonzalez, Joseph\_E},
  year={2023},
  publisher={ArXiv}
}

@article{Rezazadeh-ICLR2025,
  title={From isolated conversations to hierarchical schemas: Dynamic tree memory representation for llms},
  author={Rezazadeh, Alireza and Li, Zichao and Wei, Wei and Bao, Yujia},
  journal={arXiv preprint arXiv:2410.14052},
  year={2024}
}

@article{Xu-NIPS2025,
  title={A-mem: Agentic memory for llm agents},
  author={Xu, Wujiang and Liang, Zujie and Mei, Kai and Gao, Hang and Tan, Juntao and Zhang, Yongfeng},
  journal={arXiv preprint arXiv:2502.12110},
  year={2025}
}

@article{Hu-Survey2025,
  title={Memory in the Age of AI Agents},
  author={Hu, Yuyang and Liu, Shichun and Yue, Yanwei and Zhang, Guibin and Liu, Boyang and Zhu, Fangyi and Lin, Jiahang and Guo, Honglin and Dou, Shihan and Xi, Zhiheng and others},
  journal={arXiv preprint arXiv:2512.13564},
  year={2025}
}

@article{Chhikara-Arxiv2025,
  title={Mem0: Building production-ready ai agents with scalable long-term memory},
  author={Chhikara, Prateek and Khant, Dev and Aryan, Saket and Singh, Taranjeet and Yadav, Deshraj},
  journal={arXiv preprint arXiv:2504.19413},
  year={2025}
}

@article{Kang-Arxiv2025,
  title={Memory OS of AI Agent},
  author={Kang, Jiazheng and Ji, Mingming and Zhao, Zhe and Bai, Ting},
  journal={arXiv preprint arXiv:2506.06326},
  year={2025}
}

@inproceedings{Tan-ACL2025,
  title={In prospect and retrospect: Reflective memory management for long-term personalized dialogue agents},
  author={Tan, Zhen and Yan, Jun and Hsu, I-Hung and Han, Rujun and Wang, Zifeng and Le, Long and Song, Yiwen and Chen, Yanfei and Palangi, Hamid and Lee, George and others},
  booktitle={Proceedings of the 63rd Annual Meeting of the Association for Computational Linguistics (Volume 1: Long Papers)},
  pages={8416--8439},
  year={2025}
}

@article{Rasmussen-Arxiv2025,
  title={Zep: a temporal knowledge graph architecture for agent memory},
  author={Rasmussen, Preston and Paliychuk, Pavlo and Beauvais, Travis and Ryan, Jack and Chalef, Daniel},
  journal={arXiv preprint arXiv:2501.13956},
  year={2025}
}

@article{Zhou-Arxiv2023,
  title={Recurrentgpt: Interactive generation of (arbitrarily) long text},
  author={Zhou, Wangchunshu and Jiang, Yuchen Eleanor and Cui, Peng and Wang, Tiannan and Xiao, Zhenxin and Hou, Yifan and Cotterell, Ryan and Sachan, Mrinmaya},
  journal={arXiv preprint arXiv:2305.13304},
  year={2023}
}

@article{Long-Arxiv2025,
  title={Seeing, listening, remembering, and reasoning: A multimodal agent with long-term memory},
  author={Long, Lin and He, Yichen and Ye, Wentao and Pan, Yiyuan and Lin, Yuan and Li, Hang and Zhao, Junbo and Li, Wei},
  journal={arXiv preprint arXiv:2508.09736},
  year={2025}
}

@article{Liu-Arxiv2023,
  title={Think-in-memory: Recalling and post-thinking enable llms with long-term memory},
  author={Liu, Lei and Yang, Xiaoyan and Shen, Yue and Hu, Binbin and Zhang, Zhiqiang and Gu, Jinjie and Zhang, Guannan},
  journal={arXiv preprint arXiv:2311.08719},
  year={2023}
}

@inproceedings{Chen-Arxiv2025,
  title={Compress to impress: Unleashing the potential of compressive memory in real-world long-term conversations},
  author={Chen, Nuo and Li, Hongguang and Chang, Jianhui and Huang, Juhua and Wang, Baoyuan and Li, Jia},
  booktitle={Proceedings of the 31st International Conference on Computational Linguistics},
  pages={755--773},
  year={2025}
}

@inproceedings{Wang-NAACL2024,
  title={Recmind: Large language model powered agent for recommendation},
  author={Wang, Yancheng and Jiang, Ziyan and Chen, Zheng and Yang, Fan and Zhou, Yingxue and Cho, Eunah and Fan, Xing and Lu, Yanbin and Huang, Xiaojiang and Yang, Yingzhen},
  booktitle={Findings of the Association for Computational Linguistics: NAACL 2024},
  pages={4351--4364},
  year={2024}
}

@article{Jimenez-NIPS2024,
  title={Hipporag: Neurobiologically inspired long-term memory for large language models},
  author={Jimenez Gutierrez, Bernal and Shu, Yiheng and Gu, Yu and Yasunaga, Michihiro and Su, Yu},
  journal={Advances in Neural Information Processing Systems},
  volume={37},
  pages={59532--59569},
  year={2024}
}

@inproceedings{Zhao-AAAI2024,
  title={Expel: Llm agents are experiential learners},
  author={Zhao, Andrew and Huang, Daniel and Xu, Quentin and Lin, Matthieu and Liu, Yong-Jin and Huang, Gao},
  booktitle={Proceedings of the AAAI Conference on Artificial Intelligence},
  volume={38},
  number={17},
  pages={19632--19642},
  year={2024}
}

@article{Zhou-Arxiv2025,
  title={Memento: Fine-tuning llm agents without fine-tuning llms},
  author={Zhou, Huichi and Chen, Yihang and Guo, Siyuan and Yan, Xue and Lee, Kin Hei and Wang, Zihan and Lee, Ka Yiu and Zhang, Guchun and Shao, Kun and Yang, Linyi and others},
  journal={arXiv preprint arXiv:2508.16153},
  year={2025}
}

@article{zhang2025memgen,
  title={Memgen: Weaving generative latent memory for self-evolving agents},
  author={Zhang, Guibin and Fu, Muxin and Yan, Shuicheng},
  journal={arXiv preprint arXiv:2509.24704},
  year={2025}
}

@article{Wang-AWM-ICML2024,
  title={Agent workflow memory},
  author={Wang, Zora Zhiruo and Mao, Jiayuan and Fried, Daniel and Neubig, Graham},
  journal={arXiv preprint arXiv:2409.07429},
  year={2024}
}

@article{Shinn-NIPS2023,
  title={Reflexion: Language agents with verbal reinforcement learning},
  author={Shinn, Noah and Cassano, Federico and Gopinath, Ashwin and Narasimhan, Karthik and Yao, Shunyu},
  journal={Advances in Neural Information Processing Systems},
  volume={36},
  pages={8634--8652},
  year={2023}
}

@article{Ouyang-Arxiv2025,
  title={Reasoningbank: Scaling agent self-evolving with reasoning memory},
  author={Ouyang, Siru and Yan, Jun and Hsu, I and Chen, Yanfei and Jiang, Ke and Wang, Zifeng and Han, Rujun and Le, Long T and Daruki, Samira and Tang, Xiangru and others},
  journal={arXiv preprint arXiv:2509.25140},
  year={2025}
}

@inproceedings{corley2005measuring,
  title={Measuring the semantic similarity of texts},
  author={Corley, Courtney D and Mihalcea, Rada},
  booktitle={Proceedings of the ACL workshop on empirical modeling of semantic equivalence and entailment},
  pages={13--18},
  year={2005}
}

@article{Wang-m+-ICML2025,
  title={M+: Extending MemoryLLM with Scalable Long-Term Memory},
  author={Wang, Yu and Krotov, Dmitry and Hu, Yuanzhe and Gao, Yifan and Zhou, Wangchunshu and McAuley, Julian and Gutfreund, Dan and Feris, Rogerio and He, Zexue},
  journal={arXiv preprint arXiv:2502.00592},
  year={2025}
}

@inproceedings{Qian-memorag-WC2025,
  title={Memorag: Boosting long context processing with global memory-enhanced retrieval augmentation},
  author={Qian, Hongjin and Liu, Zheng and Zhang, Peitian and Mao, Kelong and Lian, Defu and Dou, Zhicheng and Huang, Tiejun},
  booktitle={Proceedings of the ACM on Web Conference 2025},
  pages={2366--2377},
  year={2025}
}

@inproceedings{kwon2023efficient,
  title={Efficient memory management for large language model serving with pagedattention},
  author={Kwon, Woosuk and Li, Zhuohan and Zhuang, Siyuan and Sheng, Ying and Zheng, Lianmin and Yu, Cody Hao and Gonzalez, Joseph and Zhang, Hao and Stoica, Ion},
  booktitle={Proceedings of the 29th symposium on operating systems principles},
  pages={611--626},
  year={2023}
}

@article{zheng-sglang-NIPS2024,
  title={Sglang: Efficient execution of structured language model programs},
  author={Zheng, Lianmin and Yin, Liangsheng and Xie, Zhiqiang and Sun, Chuyue Livia and Huang, Jeff and Yu, Cody Hao and Cao, Shiyi and Kozyrakis, Christos and Stoica, Ion and Gonzalez, Joseph E and others},
  journal={Advances in neural information processing systems},
  volume={37},
  pages={62557--62583},
  year={2024}
}

@article{An-ICLR2025,
  title={Why does the effective context length of LLMs fall short?},
  author={An, Chenxin and Zhang, Jun and Zhong, Ming and Li, Lei and Gong, Shansan and Luo, Yao and Xu, Jingjing and Kong, Lingpeng},
  journal={arXiv preprint arXiv:2410.18745},
  year={2024}
}

@article{wu-longmemeval-ICLR2025,
  title={Longmemeval: Benchmarking chat assistants on long-term interactive memory},
  author={Wu, Di and Wang, Hongwei and Yu, Wenhao and Zhang, Yuwei and Chang, Kai-Wei and Yu, Dong},
  journal={arXiv preprint arXiv:2410.10813},
  year={2024}
}

@article{zhou2025mem1,
  title={MEM1: Learning to Synergize Memory and Reasoning for Efficient Long-Horizon Agents},
  author={Zhou, Zijian and Qu, Ao and Wu, Zhaoxuan and Kim, Sunghwan and Prakash, Alok and Rus, Daniela and Zhao, Jinhua and Low, Bryan Kian Hsiang and Liang, Paul Pu},
  journal={arXiv preprint arXiv:2506.15841},
  year={2025}
}

@article{yu2025memagent,
  title={MemAgent: Reshaping Long-Context LLM with Multi-Conv RL-based Memory Agent},
  author={Yu, Hongli and Chen, Tinghong and Feng, Jiangtao and Chen, Jiangjie and Dai, Weinan and Yu, Qiying and Zhang, Ya-Qin and Ma, Wei-Ying and Liu, Jingjing and Wang, Mingxuan and others},
  journal={arXiv preprint arXiv:2507.02259},
  year={2025}
}

@article{yan2025memory,
  title={Memory-r1: Enhancing large language model agents to manage and utilize memories via reinforcement learning},
  author={Yan, Sikuan and Yang, Xiufeng and Huang, Zuchao and Nie, Ercong and Ding, Zifeng and Li, Zonggen and Ma, Xiaowen and Kersting, Kristian and Pan, Jeff Z and Sch{\"u}tze, Hinrich and others},
  journal={arXiv preprint arXiv:2508.19828},
  year={2025}
}

@article{wang2025mem,
  title={Mem-$\{$$\backslash$alpha$\}$: Learning memory construction via reinforcement learning},
  author={Wang, Yu and Takanobu, Ryuichi and Liang, Zhiqi and Mao, Yuzhen and Hu, Yuanzhe and McAuley, Julian and Wu, Xiaojian},
  journal={arXiv preprint arXiv:2509.25911},
  year={2025}
}

@article{wu2025resum,
  title={ReSum: Unlocking Long-Horizon Search Intelligence via Context Summarization},
  author={Wu, Xixi and Li, Kuan and Zhao, Yida and Zhang, Liwen and Ou, Litu and Yin, Huifeng and Zhang, Zhongwang and Yu, Xinmiao and Zhang, Dingchu and Jiang, Yong and others},
  journal={arXiv preprint arXiv:2509.13313},
  year={2025}
}

@article{sun2025scaling,
  title={Scaling long-horizon llm agent via context-folding},
  author={Sun, Weiwei and Lu, Miao and Ling, Zhan and Liu, Kang and Yao, Xuesong and Yang, Yiming and Chen, Jiecao},
  journal={arXiv preprint arXiv:2510.11967},
  year={2025}
}

@article{ye2025agentfold,
  title={AgentFold: Long-Horizon Web Agents with Proactive Context Management},
  author={Ye, Rui and Zhang, Zhongwang and Li, Kuan and Yin, Huifeng and Tao, Zhengwei and Zhao, Yida and Su, Liangcai and Zhang, Liwen and Qiao, Zile and Wang, Xinyu and others},
  journal={arXiv preprint arXiv:2510.24699},
  year={2025}
}

@misc{slime_github,
  author       = {Zilin Zhu and Chengxing Xie and Xin Lv and slime Contributors},
  title        = {slime: An LLM post-training framework for RL Scaling},
  year         = {2025},
  howpublished = {\url{https://github.com/THUDM/slime}},
  note         = {GitHub repository. Corresponding author: Xin Lv},
  urldate      = {2025-06-19}
}

@article{shridhar2020alfworld,
  title={Alfworld: Aligning text and embodied environments for interactive learning},
  author={Shridhar, Mohit and Yuan, Xingdi and C{\^o}t{\'e}, Marc-Alexandre and Bisk, Yonatan and Trischler, Adam and Hausknecht, Matthew},
  journal={arXiv preprint arXiv:2010.03768},
  year={2020}
}
